\newcommand{\propnumber}{} 
\newtheorem{rem}{Remark}
\newcommand{\squishlist}{
 \begin{list}{$\bullet$}
  { \setlength{\itemsep}{0pt}
     \setlength{\parsep}{1pt}
     \setlength{\topsep}{1pt}
     \setlength{\partopsep}{0pt}
     \setlength{\leftmargin}{1.5em}
     \setlength{\labelwidth}{1em}
     \setlength{\labelsep}{0.5em} } }
\newcommand{\squishend}{
  \end{list}  }
\newcommand{{\phivmat}}{\boldsymbol{{\phi}}}
\newcommand{{\phiv}}{\boldsymbol{{\phi}}}
\newcommand{\ba}[1]{\begin{align}#1\end{align}}
\begin{document}

\title{Hyperparameter Tuning Through Pessimistic Bilevel Optimization}
\author{\name Meltem Apaydin Ustun \email mapaydin@nku.edu.tr \\
       \addr{Department of Electrical and Electronics Engineering, \\
       Tekirdag Namik Kemal University, Corlu, Tekirdag, Turkiye}  
       \AND
       \name Liang Xu \email lix21@pitt.edu \\
        \addr Department of Industrial Engineering\\
  University of Pittsburgh, Pittsburgh, PA 15261, USA 
  \AND
       \name Bo Zeng \email bzeng@pitt.edu \\
       \addr Department of Industrial Engineering\\
  University of Pittsburgh, Pittsburgh, PA 15261, USA 
       \AND
       \name Xiaoning Qian \email xqian@ece.tamu.edu \\
      \addr Department of Electrical and Computer Engineering\\
  Texas A\&M University, College Station, TX 77843, USA
       }

\editor{My editor}

\maketitle

\begin{abstract}%
Automated hyperparameter search in machine learning, especially for deep learning models, is typically formulated as a bilevel optimization problem, with hyperparameter values determined by the upper level and the model learning achieved by the lower-level problem. Most of the existing bilevel optimization solutions either assume the uniqueness of the optimal training model given hyperparameters or adopt an optimistic view when the non-uniqueness issue emerges. Potential model uncertainty may arise when training complex models with limited data, especially when the uniqueness assumption is violated. Thus, the suitability of the optimistic view underlying current bilevel hyperparameter optimization solutions is questionable. In this paper, we propose pessimistic bilevel hyperparameter optimization to assure appropriate outer-level hyperparameters to better generalize the inner-level learned models, by explicitly incorporating potential uncertainty of the inner-level solution set. To solve the resulting computationally challenging pessimistic bilevel optimization problem, we develop a novel relaxation-based approximation method. It derives pessimistic solutions with more robust prediction models. In our empirical studies of automated hyperparameter search for binary linear classifiers, pessimistic solutions have demonstrated better prediction performances than optimistic counterparts when we have limited training data or perturbed testing data, showing the necessity of considering pessimistic solutions besides existing optimistic ones.
\end{abstract}

\begin{keywords}
  hyperparameter tuning, bilevel optimization, pessimistic bilevel optimization
\end{keywords}

\section{Introduction}
\label{intro}

Machine learning models often involve hyperparameters in training. Hyperparameter tuning is crucial for obtaining generalizable models but mostly done in heuristic ways until the recent emergent necessity of automated machine learning (AutoML) when training deep neural networks with a significantly large number of hyperparameters. In supervised learning, we seek a prediction model that performs well on unseen data, whose performance highly depends on the hyperparameters used for training. Note that, for given hyperparameters, we search for good models with respect to a chosen loss function based on training data. Hence, to achieve a desired performance on test data, hyperparameters should be selected carefully to constrain the search space of models in training. Such a testing-training hierarchical structure motivates to formulate the aforementioned problem as a bilevel program to support an automated process of hyperparameter tuning.  Specifically,  the outer level deals with the hyperparameter search while the inner level formulates the model training problem. 

For a bilevel optimization problem, there are two decision makers (DMs) interacting with each other. The outer-level DM has control over her decision variables (hyperparameters in our setting) $\textbf{x}\in \mathbb{R}^n$; and following her decision, the inner-level DM computes and provides the corresponding decision (model coefficients for example), $\textbf{y}\in \mathbb{R}^m$. Hence, the solution strategy of the inner level, $\textbf{y}$, is parameterized by the outer-level DM's decision $\textbf{x}$. Let $F(\textbf{x},\textbf{y})$ and $f(\textbf{x},\textbf{y})$ be the objective functions of the outer- and inner-level DMs, respectively. $G(\textbf{x})$ and $g(\textbf{x},\textbf{y})$ denote constraint functions of the outer- and inner-level DMs. For given outer-level decision variables $\textbf{x}$, we can formulate the inner-level problem as:
\ba{
\underset{\textbf{y}\in \mathbb{R}^m}{\text{min.}}~ \lbrace f(\textbf{x},\textbf{y})~:~g(\textbf{x},\textbf{y})\leq 0 \rbrace. \label{LLP}
}
Let $\Psi(\textbf{x})$ denote the solution set of the inner-level DM's problem in \eqref{LLP} for fixed $\textbf{x}$, i.e., $\text{argmin.}\lbrace f(\textbf{x},\textbf{y}):\textbf{y}\in\mathbb{R}^m, g(\textbf{x},\textbf{y})\leq 0\rbrace$. Clearly,  $\Psi$ is a point-to-set mapping from $\mathbb{R}^n$ to the power set of $\mathbb{R}^m$. Next, we express the outer-level problem:
\ba{
        \underset{\textbf{x}\in \mathbb{R}^n}{\text{``min."}}~ \lbrace F(\textbf{x},\textbf{y})~:~G(\textbf{x})\leq 0 , ~ \textbf{y} \in \Psi(\textbf{x})\rbrace. \label{ULP}
}        
Adopting the notations from \citet{dempe2002foundations}, we introduce the quotation marks to emphasize the uncertainty arisen from the solution set $\Psi(\textbf{x})$ when it has multiple elements for some or all $\textbf{x}$. In such situations, the outer-level DM is uncertain about the realization of the inner-level DM's decision. In general, there are two ways to overcome this issue: formulating the bilevel problem in 1) an optimistic, also known as strong formulation, or 2) a pessimistic view, also known as weak formulation. In the optimistic reformulation (i.e., \eqref{ULP} without the quotation marks) , the outer-level DM assumes the inner-level DM will be cooperative by providing a solution from his solution set in favor of outer-level DM's interest. For the pessimistic formulation, outer-level DM protects herself against unwanted decisions in the inner-level DM's solution set, which is the following three-level \textbf{PBL} model: 
\begin{equation}
    \begin{aligned}
        \text{\textbf{PBL:}}~~ \underset{\textbf{x}}{\text{min.}}~&\underset{\textbf{y}\in \Psi(\textbf{x})}{\text{max.}} ~ F(\textbf{x},\textbf{y})\\
        \text{s.t.} ~ & G(\textbf{x})\leq 0.
    \label{PBL}
    \end{aligned}
\end{equation}

Clearly, if the solution set $\Psi(\textbf{x})$ is guaranteed to be a singleton for any $\textbf{x}$, e.g., $f(\textbf{x},\textbf{y})$ is strictly convex in $\textbf{y}$ for any $\textbf{x}$, the optimistic and the pessimistic formulations coincide. Nevertheless, that guarantee typically does not hold. Non-uniqueness actually happens often when the inner-level problem is linear or non-convex. It also occurs when the inner-level DM solves his optimization problem approximately. For these cases, \textbf{PBL} helps the outer-level DM to bound the unwanted damage created by the unexpected realization of the inner-level DM's decision, and derives more robust solutions to against any possible realization of the inner-level solution. 

\subsection{An Empirical Study: Why a Pessimistic View? }
\label{motivEx}
\begin{figure}[h!]
\centering \vspace{-2mm}    
    \includegraphics[width=0.6\columnwidth]{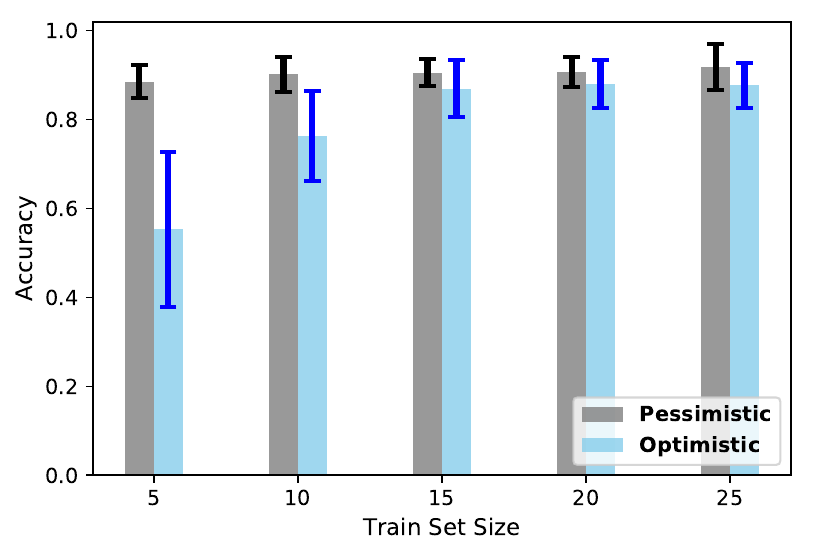}\vspace{-5mm}
\caption{Comparison of pessimistic and optimistic solutions by the average testing accuracy of ten runs with random splits between training and validation sets.}\vspace{-3mm}
\label{fig:motivEx}
\end{figure}

Hyperparameter tuning by bilevel optimization can be challenging when we only have limited amount of training data since the learned models can be highly uncertain and prone to overfitting. One critical question in this ``small data'' regime is: how shall we partition the limited data into training and validation sets to determine hyperparameters that lead to a generalizable model? We argue that through pessimistic bilevel optimization, we derive more robust models by optimizing hyperparameters to bound the damage due to the worst possible model performing on the validation set. Thanks to the inherent and strong interactions between the outer- and inner-level DMs, we expect that the pessimistic bilevel hyperparameter tuning will be \emph{less} sensitive to different splits between training and validation sets of limited data, compared to the optimistic solutions. 

To validate our discussion, we start with an empirical study of binary classification given data of small sample size. More details on the experimental setup are given in Section~\ref{smallData}. 
Figure \ref{fig:motivEx} presents the performances of the evaluated optimistic and pessimistic models 
for various training set sizes at a fixed validation set size of $10$. 
It is clear that the performance of the models obtained by the pessimistic solution is significantly better than those of the optimistic ones in testing accuracy. This difference is larger especially when we have a very limited training data. When the training set size is $5$, the accuracy of the evaluated optimistic models is almost no better than a random guess. Also, the variability of the pessimistic solutions' performances is generally smaller than that of the optimistic ones across runs.
Moreover, pessimistic models demonstrate a much more stable trend with the average accuracy not affected much by the size of the training set. 
These results motivate us to consider the pessimistic view to bilevel hyperparameter tuning for more robust solutions in uncertain scenarios of the inner-level training problem.


\subsection{Contributions}



Pessimistic bilevel optimization is challenging in terms of both computation and theoretical analysis \citep{dempe2014necessary}. In this paper, we first formulate a three-level \textbf{PBL} program for hyperparameter tuning to address the issues when the commonly assumed uniqueness is violated or when the inner-level model training is not solved to global optimal, which occurs frequently when fitting large models with limited training data. Moreover, by explicitly incorporating inner-level model uncertainty, the resulting \textbf{PBL} formulation may lead to more robust models when test data is different from training data. To solve this computationally challenging \textbf{PBL} program~\citep{wiesemann2013pessimistic, dempe2014necessary, zeng2020practical}, we employ a relaxation-based solution strategy \citep{zeng2020practical}. 
To the best of our knowledge, our study is the first to examine the suitability of the existing optimistic hyperparameter tuning and investigate the applicability of pessimistic bilevel optimization in machine learning. Our main contributions are:
\squishlist
    \item We investigate the suitability of the optimistic and pessimistic bilevel hyperparameter tuning solutions through empirical studies under different settings. To avoid complications due to suboptimality when considering complex models with no global optimality guarantee, we focus on binary linear classifiers. We show that optimistic solutions are indeed prone to overfitting with limited training data or shifted test data while pessimistic solutions are more robust, achieving higher test accuracy in these scenarios. 
    \item We for the first time formulate a three-level \textbf{PBL} program for hyperparameter tuning with a relaxation-based solution method. Using customized approximation techniques for binary classifiers, we convert the sophisticated non-convex and multilevel optimization problem into a computationally feasible formulation that can be solved to global optimality. 
    \item By introducing an $\varepsilon$-approximation of the inner-level solution set, we explicitly incorporate potential inner-level model uncertainty so that the learned model and hyperparameters are more robust with shifted test data. Such a capability is desired for AutoML in adversarial and meta-learning applications.  
    \item Under the transfer learning framework, we have tested our automated hyperparameter tuning with deep neural networks, where mapping to a feature space is achieved by using a convolutional neural network~(CNN) architecture and then optimistic and pessimistic models are optimized on the same mapped feature space. Our experimental results clearly show that the model obtained by our pessimistic bilevel optimization outperforms the model obtained by the optimistic one, again demonstrating the necessity of pessimistic AutoML solution strategies.    
\squishend

\section{Related Work}

A considerable amount of literature has been published on the interplay of hyperparameter tuning and bilevel optimization. \citet{bennett2006model} and  \citet{kunapuli2008bilevel} solved the model selection problem in support vector machines (SVM) by a bilevel optimization program. The authors utilized a K-fold cross-validation loss function as the outer-level objective for hyperparameter tuning with SVM as the inner-level training problem. The bilevel optimization formulation was converted into a single-level mathematical program with equilibrium constraints (MPEC) and solved by a nonlinear programming solver. 
More recently, bilevel formulations to optimize a huge number of hyperparameters have received considerable attention led by gradient-based methods specifically designed for that purpose~\citep{domke2012generic, maclaurin2015gradient, franceschi2017forward}. In these techniques, the general assumption is that the inner-level problem has a unique solution for any given outer-level hyperparameters. In the pioneering paper by \citet{domke2012generic}, based on the uniqueness assumption for the inner-level problem, a truncated bilevel optimization technique is introduced when the inner level is solved inexactly. Similarly, \citet{franceschi2018bilevel}  used gradient-based techniques to solve the bilevel hyperparameter optimization as well as the meta-learning problems, with the convergence guarantees of their iterative approach for the strongly-convex inner-level objective functions (so that the uniqueness assumption held). In these techniques, the general assumption is that the inner-level problem has a unique solution for any given outer-level hyperparameters~\citep{franceschi2017forward, maclaurin2015gradient, pedregosa2016hyperparameter, jenni2018deep, liu2021value}, or solutions converge to the optimistic one under multiple inner-level solutions \citep{mehra2019penalty}.
When the uniqueness assumption is not ensured, \citet{liu2020generic} proposed an effective strategy to address the non-uniqueness issue in the context of optimistic bilevel optimization. They designed a gradient-based approach for optimistic bilevel optimization that aggregates the outer-level objective to the inner-level objective function to guide the search direction in favor of the outer-level DM. Note that they assumed the strong-convexity of the outer-level objective function in their theoretical analysis. \citet{gao2022value} also focused on the optimistic view to the hyperparameter tuning under multiple inner-level solutions by exploiting the structure of its bilevel formulation.

Although the application of bilevel optimization and especially its pessimistic formulation is rather recent in machine learning, these methods have a long history in the optimization literature \citep{dempe2002foundations}. Much of the ongoing research on pessimistic bilevel optimization pays particular attention to linear problems, in which both the outer- and inner-level problems include linear functions in their objectives and constraints. One approach to attack such problems is to adopt penalty-based formulations. \citet{aboussoror2005weak} employed ideas from the duality theorem of the linear programming (LP) problem. They integrated a penalty term on the duality gap between primal and dual problems of the inner-level problem, and the corresponding single-level problem is solved by a sequence of bilinear programming problems. \citet{zheng2016solution} proposed an algorithm based on $K$-th best method for linear pessimistic problems. These penalty and enumeration based methods have been limited to linear pessimistic bilevel problems. 

In the works of \citet{loridan1996weak} and \citet{wiesemann2013pessimistic}, the authors addressed pessimistic bilevel problems not limited to linear ones; however, they assumed that the feasible region of the inner-level DM's problem is independent of the choice of the outer-level DM. 
The outer-level objective function is added to the inner-level problem to guide the search in an opposite direction to the leader's objective in the paper by \citet{loridan1996weak}, and the pessimistic bilevel model is approximated by solving a sequence of optimistic bilevel problems. \citet{wiesemann2013pessimistic} presented a significant analysis and discussion on the subject. In their work, the authors solved  an $\varepsilon$-approximation of the problem iteratively. One drawback of the algorithm is that it requires a global solution of sub-problems at every iteration. \citet{bergounioux2006regularization} proposed an algorithm using regularization to address the pessimistic bilevel problem; in which a penalized problem was solved approximately. A recent solution framework has been developed by \citet{zeng2020practical}, which, through a tight relaxation strategy,  enables us to make use of any existing methodology for optimistic bilevel problems to compute the pessimistic one. We employ this strategy in this paper as a basis to support our development of solution methods. More in-depth reviews regarding pessimistic bilevel optimization can be found in \citet{liu2018pessimistic} and \citet{dempe2018bilevel}. 

\section{Bilevel Hyperparameter Tuning}

In a prediction problem, we call $P_{x,y\sim D}[f(x)\neq y]$ as the true risk or generalization error of a prediction rule $f$ for a joint probability distribution $D$ over $X\times Y$ with inputs $x\in X$ and output response $y\in Y$. What we would like to find is a predictor $f$ that minimizes this error. In reality, we do not know the underlying data generating distribution $D$ or the prediction model space of $f$. For a general supervised learning problem, this true risk can also be defined as the expected loss by $f$ as $E_{x,y\sim D}[L(f(x), y)]$, where the loss function, $L$, describes the potential penalty to predict $f(x)$ when it deviates from the true label $y$. Also note that when the loss function is a 0-1 loss for a classification problem, defined as
\begin{equation}
    L_{0-1}(f(x), y)=
    \begin{cases}
      0 & \text{if $f(x)=y$}\\
      1 & \text{if $f(x)\neq y$},
    \end{cases} 
\end{equation}
the expected loss value and the generalization error coincide. From now on, we use the general risk function as $E_{x,y\sim D}[L(f(x), y)]$.

In practice, when we only have access to sample data, empirical risk is taken into account, defined as
\begin{equation}
\label{erm-obj}
    L_S(f) = \frac{1}{m}\sum_{i=1}^m L(f(x_i), y_i)
\end{equation}
over a given sample $S=\lbrace (x_1,y_1), (x_2, y_2),\dots , (x_m, y_m)\rbrace$. The problem then becomes to minimize this empirical risk measure (ERM). In practice, ERM is optimized over a model space, and this learning process is controlled by tuning so-called hyperparameters. For example, a regularization term is often added to the ERM objective in~\eqref{erm-obj} to control the complexity of the model, which constrains the search space of the model parameters. In many practical cases, such constraints require the hyperparameters of the model or algorithm to be tuned carefully to obtain a generalizable model. In most cases, the data set is divided into two parts: training and validation. The model's parameters are determined by an ERM procedure on the training set, and hyperparameters are then determined by minimizing a loss function on the validation set. As hyperparameter tuning is crucial for obtaining generalizable test performance, this problem can be automated by solving its bilevel formulation as
\ba{\small
\label{general:bilevel}
\underset{\lambda\in\Lambda}{\text{min.}}~&L_{S_{\text{val}}}(f(\theta)) \\
& \theta \in \Psi(\lambda), \nonumber
}
where
\ba{\small
\label{general-BL-inner}
\Psi(\lambda) = \underset{\theta\in\Theta}{\text{argmin.}}\lbrace L_{S_{\text{train}}} (f(\theta)):~g(\theta,\lambda)\leq 0 \rbrace.
}
Here, $\theta \in \Theta$ denotes the model parameters, and $\lambda \in \Lambda$ corresponds to hyperparameters of the model training. The constraint in \eqref{general-BL-inner} restricts the search space of the model parameters that is controlled by $\lambda$, and this dependence can be introduced in a general form as $g(\theta, \lambda)$. Hence, the outer-level objective is to minimize the loss on validation data by optimizing $\lambda$, and the model parameters belong to the parameterized optimal solution set (i.e., $\Psi(\lambda)$) of the inner-level training problem. When this set is not a singleton, there are two approaches to reformulate the problem: optimistic and pessimistic reformulations.

In the subsequent sections, we briefly describe the popular optimistic reformulation, and then propose the pessimistic reformulation and the solution strategy to solve the resulting computationally challenging three-level problem. 

\subsection{Optimistic Bilevel Hyperparameter Tuning}
The optimistic reformulation of the problem \eqref{general:bilevel}-\eqref{general-BL-inner} is
\ba{\small
\label{general-O:bilevel}
P_o^*=\underset{\lambda\in\Lambda, \theta\in\Psi(\lambda)}{\text{min.}}&L_{S_{\text{val}}}(f(\theta)).
} 
When the inner-level problem in \eqref{general-BL-inner} is not strictly convex, e.g. a linear program (LP), it is possible that for some given outer-level hyperparameters, the inner-level optimal solution set may have multiple elements. The issue becomes apparent when there's limited data available for division between training and validation sets. Under these circumstances, even with optimized hyperparameters obtained by solving the problem in \eqref{general-O:bilevel}, the resulting inner-level model may struggle to generalize effectively when applied to unseen test data, especially when test data is generated with a shifted data generation process. 
When training complex models with non-convex objective functions, this can be even more problematic as we may only be able to derive suboptimal solutions for the inner-level problem. Hence, it is more desired to adopt a pessimistic view to perform bilevel hyperparameter tuning. By so, 
we seek to optimize the hyperparameters so that the derived models still perform well with respect to an out-of-sample validation loss under the worst-possible model that achieves similar training performance. 

\subsection{Pessimistic Bilevel Hyperparameter Tuning}
\label{Section:PBL-general}
The pessimistic reformulation of the problem \eqref{general:bilevel}-\eqref{general-BL-inner} is
\ba{\small
\label{general-P:bilevel}
P_p^*=\underset{\lambda\in\Lambda}{\text{min.}} \underset{\theta\in\Psi(\lambda)}{\text{max.}} & L_{S_{\text{val}}}(f(\theta)).
}
Let $(\lambda^*_o,\theta^*_o)$ denote an optimal solution to the outer-level problem of optimistic formulation~\eqref{general-O:bilevel}, and $P_p(\lambda,\cdot)$ the optimal value of \eqref{general-P:bilevel} for given  $\lambda$. The next result follows straightforwardly. 
\begin{proposition}
\label{prop_pb_ob}
We have $P^*_o\leq P^*_p\leq P_p(\lambda^*_o,\cdot)$, and the equalities hold if $\Psi(\lambda)$ is a singleton for all $\lambda$.
\end{proposition}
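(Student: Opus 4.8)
The plan is to reduce all three relations to the elementary order fact that, over any fixed nonempty set, a minimum never exceeds a maximum, combined with monotonicity of the outer minimization over $\lambda$. To set this up I would write the optimistic inner value as $\phi(\lambda):=\min_{\theta\in\Psi(\lambda)}L_{S_{\text{val}}}(f(\theta))$ and recall the pessimistic inner value $P_p(\lambda,\cdot)=\max_{\theta\in\Psi(\lambda)}L_{S_{\text{val}}}(f(\theta))$, so that the two optimal values become $P_o^*=\min_{\lambda\in\Lambda}\phi(\lambda)$ and $P_p^*=\min_{\lambda\in\Lambda}P_p(\lambda,\cdot)$.

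For the left inequality $P_o^*\le P_p^*$ I would argue pointwise: for every feasible $\lambda$ the set $\Psi(\lambda)$ is nonempty, so $\phi(\lambda)\le P_p(\lambda,\cdot)$. Since $h\le H$ pointwise forces $\min_\lambda h\le\min_\lambda H$, taking the minimum over $\lambda$ on both sides yields $P_o^*\le P_p^*$. The middle inequality $P_p^*\le P_p(\lambda_o^*,\cdot)$ is then immediate: because $(\lambda_o^*,\theta_o^*)$ solves the optimistic problem we have $\lambda_o^*\in\Lambda$ and $\theta_o^*\in\Psi(\lambda_o^*)$, hence $\Psi(\lambda_o^*)$ is nonempty and $P_p(\lambda_o^*,\cdot)$ is one of the candidate values in the outer minimization defining $P_p^*$, so it cannot be smaller than $P_p^*$.

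For the equality statement, suppose $\Psi(\lambda)$ is a singleton for every $\lambda$. Then for each $\lambda$ the unique feasible $\theta$ makes the inner minimum and maximum coincide, giving $\phi(\lambda)=P_p(\lambda,\cdot)$ identically. Minimizing over $\lambda$ gives $P_o^*=P_p^*$ at once, and since $\lambda_o^*$ attains $P_o^*=\phi(\lambda_o^*)$ it also attains $P_p(\lambda_o^*,\cdot)=P_p^*$; hence all three quantities agree.

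I do not expect a substantive obstacle: the argument is purely order-theoretic. The only points that demand care are bookkeeping ones, namely ensuring the inner $\min$ and $\max$ are taken over a nonempty $\Psi(\lambda)$ so that $\phi$ and $P_p(\cdot,\cdot)$ are well defined, and confirming that the optimistic optimizer $\lambda_o^*$ is admissible for the pessimistic outer problem. Restricting the outer domain to $\{\lambda\in\Lambda:\Psi(\lambda)\neq\emptyset\}$ throughout keeps every expression meaningful and makes the monotonicity step clean.
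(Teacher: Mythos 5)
Your argument is correct and is exactly the elementary order-theoretic reasoning the paper has in mind when it states that the result ``follows straightforwardly'' without giving a proof: pointwise $\min\le\max$ over the nonempty set $\Psi(\lambda)$, feasibility of $\lambda_o^*$ in the pessimistic outer problem, and collapse of $\min$ and $\max$ in the singleton case. Your care about nonemptiness of $\Psi(\lambda)$ and admissibility of $\lambda_o^*$ is appropriate but does not change the substance.
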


Proposition 1 indicates that the optimal value of the pessimistic reformulation is upper-bounded by the equation~\eqref{general-P:bilevel} evaluated at the optimum solution $\lambda_o^*$ of the optimistic formulation. We note that the three-level pessimistic bilevel formulation in \eqref{general-P:bilevel} for hyperparameter tuning captures richer information on the interactions between outer- and inner-level DMs. Specifically, when there is a limited amount of training data, the inner-level problem may have multiple solutions, of which not all are generalizable well to test data. When hyperparameters are optimized in a way such that the worst-case model still performs well on validation data, as shown in the pessimistic model obtained by \eqref{general-P:bilevel}, they should achieve an improved performance on test data as demonstrated in our experiments. Furthermore, if test data is corrupted by an attacker, this pessimistic formulation yields more robust models by taking a worst-case view.


To solve the computationally challenging three-level problem in \eqref{general-P:bilevel}, we next adopt a reformulation strategy presented in~\citet{zeng2020practical}, which reduces its computation to solving a bilevel optimization problem by introducing $\bar\theta$, the replica of decision variables of $\theta$, into the outer-level problem.

\begin{proposition}
The following bilevel optimization problem is a tight relaxation of the three-level formulation in \eqref{general-P:bilevel}, and its optimal solution $\lambda^*$ is also  optimal to \eqref{general-P:bilevel}.
\ba{\small
\label{general-P-relax:bilevel}
P^*_p=\underset{\begin{subarray}{c}
  \lambda\in\Lambda, \bar\theta\in\Theta, \\
  g(\bar\theta,\lambda) \leq 0
  \end{subarray}}{\text{min.}} & \underset{\theta\in\Theta}{\text{max.}} \Large\{ L_{S_{\text{val}}}(f(\theta)): g(\theta,\lambda) \leq 0,L_{S_{\text{train}}} (f(\theta)) \leq L_{S_{\text{train}}} (f(\bar\theta))
\Large\}.}
\end{proposition}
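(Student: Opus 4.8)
The plan is to establish the equality between the three-level value $P_p^*$ in \eqref{general-P:bilevel} and the optimal value of the bilevel relaxation \eqref{general-P-relax:bilevel} by fixing the outer hyperparameter $\lambda$ and analyzing the effect of minimizing over the replica variable $\bar\theta$. First I would introduce the optimal inner-level training value $v(\lambda):=\min_{\theta\in\Theta}\{L_{S_{\text{train}}}(f(\theta)):g(\theta,\lambda)\leq 0\}$ and record the elementary rewriting $\Psi(\lambda)=\{\theta\in\Theta:g(\theta,\lambda)\leq 0,\ L_{S_{\text{train}}}(f(\theta))\leq v(\lambda)\}$, which holds because any training-feasible $\theta$ attaining an objective value at most the minimum must in fact attain it. With this in hand, the entire proposition reduces to the single-$\lambda$ claim that
\[
\min_{\substack{\bar\theta\in\Theta\\ g(\bar\theta,\lambda)\leq 0}}\ \max_{\theta\in\Theta}\bigl\{L_{S_{\text{val}}}(f(\theta)):g(\theta,\lambda)\leq 0,\ L_{S_{\text{train}}}(f(\theta))\leq L_{S_{\text{train}}}(f(\bar\theta))\bigr\}=\max_{\theta\in\Psi(\lambda)}L_{S_{\text{val}}}(f(\theta)),
\]
after which taking the minimum over $\lambda\in\Lambda$ on both sides yields $P_p^*$ equal to the relaxation value and simultaneously identifies the optimizing $\lambda^*$.

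I would prove the single-$\lambda$ claim by two opposing inequalities. For ``$\geq$'', note that every outer-feasible replica satisfies $L_{S_{\text{train}}}(f(\bar\theta))\geq v(\lambda)$ by definition of $v(\lambda)$, so the inner feasible set $\{\theta:g(\theta,\lambda)\leq0,\ L_{S_{\text{train}}}(f(\theta))\leq L_{S_{\text{train}}}(f(\bar\theta))\}$ contains $\Psi(\lambda)$; hence the inner maximum is at least $\max_{\theta\in\Psi(\lambda)}L_{S_{\text{val}}}(f(\theta))$ for every $\bar\theta$, and the bound survives the outer minimization. For ``$\leq$'', I would exhibit a single good replica: choose any $\bar\theta\in\Psi(\lambda)$, so that $L_{S_{\text{train}}}(f(\bar\theta))=v(\lambda)$ and the inner constraint $L_{S_{\text{train}}}(f(\theta))\leq L_{S_{\text{train}}}(f(\bar\theta))$ collapses the inner feasible set to exactly $\Psi(\lambda)$; the inner maximum at this particular $\bar\theta$ therefore equals $\max_{\theta\in\Psi(\lambda)}L_{S_{\text{val}}}(f(\theta))$, and the outer minimum can only be smaller. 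Combining the two gives the claimed equality. The conceptual point worth emphasizing is the ``tightness'': minimizing over $\bar\theta$ in the outer level is precisely what drives $L_{S_{\text{train}}}(f(\bar\theta))$ down to the optimal value $v(\lambda)$, restoring the optimality requirement that the plain inequality $L_{S_{\text{train}}}(f(\theta))\leq L_{S_{\text{train}}}(f(\bar\theta))$ alone would have dropped.

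The main obstacle I anticipate is not the chain of inequalities, which is routine, but handling the degenerate cases cleanly so that the per-$\lambda$ identity is literally valid. In particular the ``$\leq$'' direction selects $\bar\theta\in\Psi(\lambda)$, which presupposes $\Psi(\lambda)\neq\emptyset$, i.e. that the inner training minimum is attained; I would therefore state the attainment of $v(\lambda)$ (e.g. via continuity of $L_{S_{\text{train}}}\circ f$ with compactness, or lower semicontinuity with coercivity, on the feasible set) as a standing assumption, consistent with the definition of $\Psi(\lambda)$ as an $\operatorname{argmin}$ in \eqref{general-BL-inner}, and restrict the outer minimization to those $\lambda$ for which the inner problem is feasible so that both sides share the same effective domain in $\lambda$. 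With this bookkeeping, the equality of optimal values and the optimality of $\lambda^*$ for the original three-level problem follow immediately, since the two formulations have identical objective as functions of $\lambda$.
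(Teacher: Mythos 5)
Your argument is correct. Note that the paper itself does not actually prove this proposition: it only points to \citet{zeng2020practical} and remarks that the constraint $L_{S_{\text{train}}}(f(\theta)) \leq L_{S_{\text{train}}}(f(\bar\theta))$ captures the outer--inner interaction. Your two-inequality, per-$\lambda$ argument supplies exactly the missing mechanism, and it is the standard one underlying that reference: every outer-feasible replica $\bar\theta$ satisfies $L_{S_{\text{train}}}(f(\bar\theta)) \geq v(\lambda)$, so the inner feasible set always contains $\Psi(\lambda)$ and the relaxation value can only be larger (``$\geq$''), while choosing $\bar\theta \in \Psi(\lambda)$ collapses the inner feasible set to exactly $\Psi(\lambda)$ (``$\leq$''), so the outer minimization over $\bar\theta$ is precisely what restores the dropped optimality condition. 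Your caveat about attainment of the inner minimum (so that $\Psi(\lambda) \neq \emptyset$ and a witness $\bar\theta$ exists) is the right thing to flag; the paper implicitly assumes this by writing $\Psi(\lambda)$ as an $\operatorname{argmin}$, and in the concrete LP instantiations later in the paper it holds automatically. One small point worth making explicit if you write this up: the identity of the two objectives as functions of $\lambda$ is what yields both the equality of optimal values and the transfer of optimal $\lambda^*$ between the two formulations, which you do state at the end.
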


According to \cite{zeng2020practical}, the last constraint in \eqref{general-P-relax:bilevel} directly captures the interaction between the outer- and inner-level problems, which guarantees that optimal $\lambda^*$ to \eqref{general-P-relax:bilevel} is also optimal to $\eqref{general-P:bilevel}$. Clearly, this reformulation greatly reduces the complexity of computing \eqref{general-P:bilevel} and converts a pessimistic formulation into a $\min$-$\max$ problem. 
However, we note that the problem in \eqref{general-P-relax:bilevel} still poses non-trivial challenges. First, when the loss function is chosen to be a 0-1 loss, which is a common practice in grid search strategies to tune hyperparameters, the inner-level maximization problem \eqref{general-P-relax:bilevel} becomes non-concave. If the 0-1 loss is also chosen as a training loss, this increases the difficulty even further. Even if we use convex surrogates of the 0-1 loss in both objectives of inner- and outer-level problems, the inner maximization in \eqref{general-P-relax:bilevel} still becomes non-concave. 


In fact, a binary classification problem in (10) can be converted to a $\min$-$\min$ problem, where both the outer- and inner-level objective functions are ERM objectives. For the sake of clarity in notation, we integrate the last relaxation constraint,  $L_{S_{\text{train}}} (f(\theta)) \leq L_{S_{\text{train}}} (f(\bar\theta))$, into the constraint $g(\lambda, \theta) \leq 0$ and denote the resulting constraint by $\bar g(\lambda, \theta, \bar\theta) \leq 0$. Then we have the following result.

\begin{theorem}\label{thm:flipping}
For a binary classification problem with labels $y \in \lbrace -1, +1\rbrace$, the tight relaxation of the pessimistic bilevel hyperparameter optimization problem, which is stated as
\ba{\small
\label{eq:minmax-RP-binary}
\underset{\begin{subarray}{c}
  \lambda\in\Lambda, \bar\theta\in\Theta, \\
  g(\lambda,\bar\theta) \leq 0
  \end{subarray}}{\text{min.}}\lbrace \underset{\theta\in\Theta}{\text{max.}}~\sum_{i\in V} I(y_i f_\theta(x_i)<0):~\bar g(\lambda, \theta, \bar\theta) \leq 0\rbrace,
}
is equivalent to the following optimization problem under the 0-1 loss function:
\ba{\small
\label{eq:minmin-RP-binary}
&\underset{\begin{subarray}{c}
  \lambda\in\Lambda, \bar\theta\in\Theta, \\
  g(\lambda,\bar\theta) \leq 0
  \end{subarray}}{\text{min.}} \sum_{i\in V}I(y_i f_{\theta^*}(x_i)<0)\\
&\text{where}~\theta^*\in \underset{\theta\in\Theta}{\text{argmin.}} \lbrace \sum_{i\in V} I(\bar{y_i} f_\theta(x_i)<0):~\bar g(\lambda, \theta, \bar\theta) \leq 0\rbrace. \nonumber
}
Here, the predictor model $f$ is parameterized by $\theta$, $\lambda$ denotes the hyperparameters of the problem, $\bar\theta$ is the replicated inner-level decision variables, and $\bar{y_i}$ denotes the flipped label for $i$-th data point in the validation set. 

\begin{proof}
Note that $\sum_{i\in V}I(y_i f_\theta(x_i) < 0) = |V| - \sum_{i\in V}I(\bar{y}_if_\theta(x_i) < 0)$ where $I(\cdot)$ denotes the indicator function, and $\bar{y}_i$ is the flipped label ($\bar{y}=-1$ if $y=1$, and $\bar{y}=1$ if $y=-1$). Then,
\begin{align}
    &\underset{\begin{subarray}{c}
  \lambda\in\Lambda, \bar\theta\in\Theta, \\
  g(\lambda,\bar\theta) \leq 0
  \end{subarray}}{\text{min.}}\big \lbrace \underset{\theta\in\Theta}{\text{max.}}~ \sum_{i\in  V} I(y_if_\theta(x_i) < 0)~:~ \bar g(\lambda, \theta, \bar\theta) \leq 0 \big \rbrace \nonumber\\
    = & \underset{\begin{subarray}{c}
  \lambda\in\Lambda, \bar\theta\in\Theta, \\
  g(\lambda,\bar\theta) \leq 0
  \end{subarray}}{\text{min.}}\big \lbrace \underset{\theta\in\Theta}{\text{max.}}~ |V|-\sum_{i\in V} I(\bar{y}_if_\theta(x_i) < 0)~:~ \bar g(\lambda, \theta, \bar\theta) \leq 0\big \rbrace \nonumber\\
    = & \underset{\begin{subarray}{c}
  \lambda\in\Lambda, \bar\theta\in\Theta, \\
  g(\lambda,\bar\theta) \leq 0
  \end{subarray}}{\text{min.}}~ |V|- \underset{\theta\in\Theta}{\text{min.}}~ \big \lbrace\sum_{i\in V} I(\bar{y}_if_\theta(x_i) < 0)~:~ \bar g(\lambda, \theta, \bar\theta) \leq 0 \big \rbrace, \nonumber
\end{align}
where the last equality follows the fact that $\text{max}~f(\theta)\equiv -\text{min}\{-f(\theta)\}$ and that $|V|$ is a constant term. This is a bilevel problem with the outer-level problem 
\begin{equation}
    \underset{\begin{subarray}{c}
  \lambda\in\Lambda, \bar\theta\in\Theta, \\
  g(\lambda,\bar\theta) \leq 0
  \end{subarray}}{\text{min.}}~|V|-\sum_{i\in V} I(\bar{y}_i f_{\theta^*}(x_i) < 0) = \underset{\begin{subarray}{c}
  \lambda\in\Lambda, \bar\theta\in\Theta, \\
  g(\lambda,\bar\theta) \leq 0
  \end{subarray}}{\text{min.}}~\sum_{i\in V} I(y_if_{\theta^*}(x_i) < 0),
    \label{myeq:ULflip}
\end{equation}
where 
\begin{equation}
    \theta^*\in \underset{\theta}{\text{argmin.}}\lbrace ~\sum_{i\in V} I(\bar{y}_if_{\theta}(x_i) < 0)~:~\bar g(\lambda, \theta, \bar\theta) \leq 0 \rbrace.
    \label{myeq:LLflip}
\end{equation}
\end{proof}
\end{theorem}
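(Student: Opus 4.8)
The plan is to exploit the complementarity of the 0-1 loss under label flipping in binary classification, so that the inner maximization is rewritten as a minimization and the whole min-max collapses to a min-min program. The central observation is a pointwise identity: for any fixed point $x_i$ with label $y_i$, the indicator of misclassification under the true label equals one minus the indicator of misclassification under the flipped label $\bar{y}_i = -y_i$, provided the margin $y_i f_\theta(x_i)$ is nonzero. First I would verify this identity. Since $\bar{y}_i f_\theta(x_i) = -y_i f_\theta(x_i)$, we have $I(\bar{y}_i f_\theta(x_i) < 0) = I(y_i f_\theta(x_i) > 0)$, and because the events $\{y_i f_\theta(x_i) < 0\}$ and $\{y_i f_\theta(x_i) > 0\}$ partition the points with nonzero margin, their indicators sum to one. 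Summing over the validation set $V$ then gives $\sum_{i \in V} I(y_i f_\theta(x_i) < 0) = |V| - \sum_{i \in V} I(\bar{y}_i f_\theta(x_i) < 0)$, which is the identity asserted at the start of the proof.

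With this identity established, I would substitute it into the inner maximization of \eqref{eq:minmax-RP-binary}, leaving the feasibility constraint $\bar g(\lambda, \theta, \bar\theta) \le 0$ untouched. The inner objective over $\theta$ becomes $|V| - \sum_{i \in V} I(\bar{y}_i f_\theta(x_i) < 0)$. Since $|V|$ is constant in both $\theta$ and the outer variables, applying the elementary relation $\max_\theta h(\theta) = -\min_\theta(-h(\theta))$ converts the inner maximization into $|V| - \min_\theta \sum_{i \in V} I(\bar{y}_i f_\theta(x_i) < 0)$ over the same feasible set. Pulling the constant $|V|$ outside the outer minimization, the problem reduces to a bilevel program whose inner level selects $\theta^*$ minimizing the flipped-label validation loss subject to $\bar g(\lambda, \theta, \bar\theta) \le 0$, and whose outer level minimizes $|V| - \sum_{i \in V} I(\bar{y}_i f_{\theta^*}(x_i) < 0) = \sum_{i \in V} I(y_i f_{\theta^*}(x_i) < 0)$; applying the identity once more at $\theta^*$ recovers exactly \eqref{eq:minmin-RP-binary}. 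I would also note in passing that the inner feasible set always contains $\bar\theta$, so the max and min are well defined and this edge case causes no trouble.

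The step requiring the most care is the pointwise identity, specifically the degenerate case $f_\theta(x_i) = 0$ where the margin vanishes: then both indicators are zero and the sum-to-one relation fails. The identity as written therefore presumes the predictor never outputs exactly zero on validation points, or that ties are resolved by a fixed convention folded into the definition of $I(\cdot)$. I would make this assumption explicit; for the linear classifiers considered here it holds generically and can be enforced without loss by an arbitrarily small perturbation of the decision threshold. Everything else—the substitution, the max-to-min conversion, and the extraction of the constant $|V|$—is routine algebra once the identity is in hand.
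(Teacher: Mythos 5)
Your proposal is correct and follows essentially the same route as the paper's proof: the pointwise complementarity identity $\sum_{i\in V}I(y_i f_\theta(x_i)<0)=|V|-\sum_{i\in V}I(\bar{y}_i f_\theta(x_i)<0)$, the conversion $\max_\theta h(\theta)=-\min_\theta\{-h(\theta)\}$, and the extraction of the constant $|V|$ to obtain the min--min bilevel form. Your explicit handling of the degenerate case $f_\theta(x_i)=0$, where both indicators vanish and the identity fails without a tie-breaking convention, is a point the paper's proof silently passes over and is a worthwhile clarification, but it does not alter the argument.
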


\begin{rem}
We here note that Theorem~\ref{thm:flipping} is an important result in two aspects. First, it yields a simple but equivalent reformulation of the $\min$-$\max$ problem \eqref{general-P-relax:bilevel} into a $\min$-$\min$ problem. Furthermore, we can use surrogate convex losses in \eqref{eq:minmin-RP-binary}, a common practice in machine learning applications, to obtain computationally more friendly approximations. For example, assuming the inner-level problem is a convex optimization problem, then, by its optimality conditions, \eqref{eq:minmin-RP-binary} can be converted into a single-level problem. 
\end{rem}

To have a fair comparison with the optimistic model and to be able to solve the problem to optimality, we focus on such loss functions. Specifically, under the hinge loss, $L$, with a linear model, $f$, the inner-level problem can be cast as an LP problem. Furthermore, the hinge loss, $\text{max}(0,1 - yf_\theta(x))$, can be seen as a convex approximation of $I(yf_\theta(x) < 0)$, i.e., an upper bound on the 0-1 loss. 
However, flipping all the labels of the data in the validation set to convert the maximization into minimization problem creates a mismatch because of the margin introduced when using the hinge loss. Hence, instead of the 0-1 loss, we focus on a modified version, $I(yf_\theta(x) < 1)$. With $I(yf_\theta(x) < 1)$ used as a loss function, we not only aim to minimize the number of misclassified points but also the number of the ones falling into the margin defined by $\theta$. And, the problem in~\eqref{eq:minmax-RP-binary} is upper-bounded by 
\ba{\small
\label{eq:minmax-RP-binaryHinge}
\underset{\begin{subarray}{c}
  \lambda\in\Lambda, \bar\theta\in\Theta, \\
  g(\lambda,\bar\theta) \leq 0
  \end{subarray}}{\text{min.}}\lbrace \underset{\theta\in\Theta}{\text{max.}}~\sum_{i\in V} I(y_i f_\theta(x_i)<1):~\bar g(\lambda, \theta, \bar\theta) \leq 0\rbrace.
}
Now, we can flip the labels of not all but some points in the validation set to transform the problem into a $\min$-$\min$ problem. How this flipping can be performed is provided in the following Corollary~\ref{corol:flip}. First, we partition the validation set into three disjoint subsets, $|V_1|+|V_2|+|V_3| = |V|$. Let $V_1$ denote the set of indices of points that fall inside the margin determined by $\theta$, $V_2$ the set of indices of out-of-margin points classified correctly by $f$ parameterized by $\theta$, and similarly $V_3$ the set of indices of out-of-margin points that are misclassified by $f$:
\begin{align*}
&V_1 = \lbrace i \in V : | f_\theta(x_i) | < 1 \rbrace, \\
&V_2 = \lbrace i \in V : | f_\theta(x_i) | \geq 1 ~\text{and}~ y_i f_\theta(x_i) \geq 1 \rbrace, \\
&V_3 = \lbrace i \in V : | f_\theta(x_i)| \geq 1 ~\text{and}~ y_i f_\theta(x_i) \leq -1 \rbrace. 
\end{align*}

\begin{corollary}
\label{corol:flip}
$V_1, V_2$ and $V_3$ are disjoint sets, and the objective function value of \eqref{eq:minmax-RP-binaryHinge} for a given $\theta$ is $\sum_{i\in V} I(y_i f_\theta(x_i)<1) = |V_1| + |V_3| = \sum_{i\in V}I(i\in V_1 \lor i\in V_3)$. Hence, we can rewrite the problem in \eqref{eq:minmax-RP-binaryHinge} as:
\ba{
\label{eq:minmax-RP-binary-PF}
&\underset{\begin{subarray}{c}
  \lambda\in\Lambda, \bar\theta\in\Theta, \\
  g(\lambda,\bar\theta) \leq 0
  \end{subarray}}{\text{min.}}\lbrace \underset{\theta\in\Theta}{\text{max.}}~\sum_{i\in V} I(i\in V_1 \lor i\in V_3):~\bar g(\lambda, \theta, \bar\theta) \leq 0\rbrace\\
&  = \underset{\begin{subarray}{c}
  \lambda\in\Lambda, \bar\theta\in\Theta, \\
  g(\lambda,\bar\theta) \leq 0
  \end{subarray}}{\text{min.}} \lbrace |V|-\underset{\theta\in\Theta}{\text{min.}} \sum_{i\in V} I((i\in V_1)^c \land (i\in V_3)^c):~\bar g(\lambda, \theta, \bar\theta) \leq 0\rbrace,\\
& =  \underset{\begin{subarray}{c}
  \lambda\in\Lambda, \bar\theta\in\Theta, \\
  g(\lambda,\bar\theta) \leq 0
  \end{subarray}}{\text{min.}} \lbrace |V|-\underset{\theta\in\Theta}{\text{min.}} \sum_{i\in V} I(i\in V_2):~\bar g(\lambda, \theta, \bar\theta) \leq 0\rbrace.\label{min-min:PF-binary}
}

For $i\in V$, $I(i\in V_2)$ is the same as $ I(\bar y_i f_\theta(x_i)\leq -1 )$ for a given $\theta$, and $\bar y$ denotes the flipped label. 
\end{corollary}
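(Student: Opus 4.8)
The plan is to verify the three assertions of the corollary in order: that $V_1,V_2,V_3$ partition $V$, that the margin loss $\sum_{i\in V} I(y_i f_\theta(x_i)<1)$ equals $|V_1|+|V_3|$, and finally that the complementary count $I(i\in V_2)$ can be rewritten as the flipped-label margin violation $I(\bar y_i f_\theta(x_i)\le -1)$. Each of these is a pointwise statement about a single index $i$ for a fixed $\theta$, so the whole argument reduces to a case analysis on the scalar value $t_i := y_i f_\theta(x_i)$, exploiting that $|y_i|=1$ and hence $|f_\theta(x_i)| = |t_i|$.

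First I would establish the partition. Since $y_i\in\{-1,+1\}$, we have $|f_\theta(x_i)| = |t_i|$, so the condition $|f_\theta(x_i)|<1$ defining $V_1$ is exactly $-1<t_i<1$, while $|f_\theta(x_i)|\ge 1$ is exactly $t_i\ge 1$ or $t_i\le -1$. Thus the defining condition of $V_2$ collapses to $t_i\ge 1$ (the requirement $|f_\theta(x_i)|\ge 1$ being implied by $t_i\ge 1$) and that of $V_3$ to $t_i\le -1$. The three sets therefore correspond to the disjoint intervals $(-1,1)$, $[1,\infty)$, $(-\infty,-1]$ of possible $t_i$, which cover $\mathbb{R}$; disjointness and $|V_1|+|V_2|+|V_3|=|V|$ follow immediately. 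For the loss identity I note that $I(t_i<1)=1$ precisely on $(-\infty,1)=(-1,1)\cup(-\infty,-1]$, i.e. exactly for $i\in V_1\cup V_3$, giving $\sum_{i\in V} I(y_i f_\theta(x_i)<1) = |V_1|+|V_3|$ as claimed.

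Next I would use the partition to pass from the $\min$--$\max$ form to the $\min$--$\min$ form. Writing $|V_1|+|V_3| = |V| - |V_2|$ and applying $\max_\theta h(\theta) = -\min_\theta(-h(\theta))$ exactly as in the proof of Theorem~\ref{thm:flipping}, the inner maximization of $|V|-\sum_{i\in V} I(i\in V_2)$ becomes $|V|-\min_\theta \sum_{i\in V} I(i\in V_2)$. The intermediate expression involving $(i\in V_1)^c\land(i\in V_3)^c$ is just De Morgan's law applied to the event $i\in V_1\lor i\in V_3$, which by the partition is equivalent to $i\in V_2$. It then remains to identify $I(i\in V_2)$ with the flipped-label indicator: since $\bar y_i = -y_i$, we have $\bar y_i f_\theta(x_i) = -t_i$, so $\bar y_i f_\theta(x_i)\le -1 \iff t_i\ge 1 \iff i\in V_2$, which is precisely the final sentence of the corollary.

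I do not anticipate a genuinely hard step here; the content is a careful pointwise case split. The only place demanding attention is the asymmetry between the strict threshold $<1$ in the loss and the non-strict threshold $\le -1$ after flipping, together with the boundary values $t_i=1$ and $t_i=-1$. One must check that $t_i=1$ lands in $V_2$ (contributing $0$ to the loss and $1$ to the flipped count) while $t_i=-1$ lands in $V_3$ (contributing $1$ to the loss and $0$ to the flipped count), so that the two thresholds are exactly complementary on the boundary. This is what guarantees that the reduction to the $\min$--$\min$ form holds with equality rather than merely as an inequality.
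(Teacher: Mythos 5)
Your proof is correct and follows essentially the same route the paper intends: the paper gives no separate proof of Corollary~\ref{corol:flip}, treating it as an immediate consequence of the definitions of $V_1,V_2,V_3$ and of the $\max f = -\min(-f)$ flipping argument already used in Theorem~\ref{thm:flipping}, which is exactly what you spell out via the case split on $t_i = y_i f_\theta(x_i)$. Your explicit check of the boundary cases $t_i=\pm 1$ (confirming that the strict threshold in $I(y_if_\theta(x_i)<1)$ and the non-strict threshold in $I(\bar y_i f_\theta(x_i)\le -1)$ are exactly complementary) is a worthwhile detail the paper leaves implicit.
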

The inner level of \eqref{min-min:PF-binary} is actually minimizing the number of misclassified points outside the margin. Hence, in our implementation, we flip the labels of misclassified points by the optimistic hyperplane considering Proposition 1. Additionally, we also consider the points inside the margin determined based on the hyperplane obtained by the optimistic counterpart of the problem to have a more robust model. 

Let $V_f$ denote the set of indices of misclassified and marginal points. Intuitively, this pessimistic formulation for hyperparameter tuning can be viewed as creating adversarial examples by flipping the labels of points from $V_f$ and encouraging the model to have robust performance considering this worst-case scenario. We also note that these operations are on the validation set. 
Finally, if we further adopt the hinge loss with a linear model for $f$, we can use optimality conditions to convert the problem into a single-level formulation as the resulting inner-level minimization problem on $V_f$ with label flipping can be cast as an LP program. Furthermore, this partial flipping operation further reduces the computational burden as it significantly reduces the number of constraints of KKT (Karush-Kuhn-Tucker) conditions while converting the problem into a single level formulation. Details of the procedure and the resulting single-level optimization problem under the hinge loss are provided in Section~\ref{section:hinge}.

\subsection{Extension of the Pessimistic Bilevel Hyperparameter Optimization to Handle Model Uncertainty}

We further explicitly incorporate possible inner-level training model uncertainty by introducing an $\varepsilon$-approximation of the inner-level problem's solution set.  First, the inner-level training problem may not be solved to the global optimality. In this case, we can obtain an enlarged optimal solution set through the $\varepsilon$-approximation approach, which takes suboptimality into consideration. Second, the incorporation of $\varepsilon$ increases the flexibility of a learning model. Such flexibility is very crucial if a shifted or an attacked test data set is expected. Actually, rather than solely relying on the model based on the training data, it is anticipated that a better generalization capability can be achieved by incorporating a safety margin to hedge against deviations from the expected trained model, i.e., by taking its ``neighboring'' models into the account. Finally, even if the solution set of the inner-level problem is a singleton, such approximate pessimistic formulation would allow us to understand how sensitive the hyperparameters are under an enlarged optimal solution set of the inner-level trained models. By explicitly considering this sensitivity issue, the pessimistic hyperparameter tuning formulation is expected to yield more reliable models.

For the pessimistic hyperparameter optimization problem, suboptimality $\varepsilon$ can be chosen as an absolute or a relative value with respect to the inner-level training problem's loss. With the fact that $L_{S_\text{train}} (f(\bar\theta))\geq 0$  and using the relative rate, such an $\varepsilon$-approximation of the pessimistic problem can be written as follows:
\ba{\small
\label{general-P-relax-flip-eps:bilevel}
P_p^\varepsilon= \underset{\lambda\in\Lambda, \bar\theta\in\Theta}{\text{min.}}& 
L_{S_{\text{val}}}(f(\theta)) \\
& \text{s.t.}~ g(\lambda,\bar\theta) \leq 0. \nonumber \\
& \theta \in \underset{\theta\in\Theta}{\text{argmax.}}\lbrace L_{S_\text{val}}(f(\theta)):~ L_{S_{train}} (f(\theta)) \leq (1+\varepsilon)L_{S_\text{train}} (f(\bar\theta)),~g(\lambda,\theta) \leq 0 \rbrace.\nonumber
}

\begin{proposition}
 Function  $P_p^\varepsilon$ is increasing with respect to $\varepsilon$. Specifically, considering $0\leq \varepsilon_1\leq \varepsilon_2$, we have $$P^{\varepsilon_2}_p(\lambda^*_o,\cdot)\geq P_p^{\varepsilon_2}\geq P_p^{\varepsilon_1}\geq P_p^0\equiv P^*_p\geq P^*_o.$$
\end{proposition}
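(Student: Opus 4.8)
The plan is to read the displayed chain from the inside out, establishing each link by an elementary monotonicity argument and then gluing the two endpoints to the previously proven Propositions~1 and~2. Throughout, I fix the outer variables and write $\Theta^\varepsilon(\lambda,\bar\theta) = \{\theta\in\Theta : L_{S_{\text{train}}}(f(\theta)) \leq (1+\varepsilon)L_{S_{\text{train}}}(f(\bar\theta)),\ g(\lambda,\theta)\leq 0\}$ for the feasible region of the inner maximization in \eqref{general-P-relax-flip-eps:bilevel}, and let $\Phi^\varepsilon(\lambda,\bar\theta) = \max_{\theta\in\Theta^\varepsilon(\lambda,\bar\theta)} L_{S_{\text{val}}}(f(\theta))$, so that $P_p^\varepsilon = \min\{\Phi^\varepsilon(\lambda,\bar\theta) : \lambda\in\Lambda,\ \bar\theta\in\Theta,\ g(\lambda,\bar\theta)\leq 0\}$. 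This separates the dependence on $\varepsilon$ (which lives entirely inside $\Theta^\varepsilon$) from the outer feasible region (which does not depend on $\varepsilon$).

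The core step is the monotonicity of $\varepsilon \mapsto P_p^\varepsilon$. First I would show that the inner feasible sets are nested: for $0\leq\varepsilon_1\leq\varepsilon_2$ and any fixed $(\lambda,\bar\theta)$, the nonnegativity $L_{S_{\text{train}}}(f(\bar\theta))\geq 0$ gives $(1+\varepsilon_1)L_{S_{\text{train}}}(f(\bar\theta)) \leq (1+\varepsilon_2)L_{S_{\text{train}}}(f(\bar\theta))$, so the relative-suboptimality constraint is only loosened while $g(\lambda,\theta)\leq 0$ is untouched; hence $\Theta^{\varepsilon_1}(\lambda,\bar\theta)\subseteq\Theta^{\varepsilon_2}(\lambda,\bar\theta)$. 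Enlarging the feasible set of a maximization cannot decrease its value, so $\Phi^{\varepsilon_1}(\lambda,\bar\theta)\leq\Phi^{\varepsilon_2}(\lambda,\bar\theta)$ pointwise in $(\lambda,\bar\theta)$. Since this holds at every outer point and the outer feasible region $\{(\lambda,\bar\theta): g(\lambda,\bar\theta)\leq 0\}$ is the same for both, taking the minimum preserves the inequality, so $P_p^{\varepsilon_1}\leq P_p^{\varepsilon_2}$. This is precisely the ``increasing'' claim, and specializing to $0\leq\varepsilon_1\leq\varepsilon_2$ yields the middle links $P_p^{\varepsilon_2}\geq P_p^{\varepsilon_1}\geq P_p^{0}$.

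Next I would settle the two endpoints. For $P_p^0\equiv P_p^*$, setting $\varepsilon=0$ turns the relative constraint into $L_{S_{\text{train}}}(f(\theta))\leq L_{S_{\text{train}}}(f(\bar\theta))$, so \eqref{general-P-relax-flip-eps:bilevel} reduces verbatim to the tight relaxation \eqref{general-P-relax:bilevel}, whose optimal value is $P_p^*$ by Proposition~2. The inequality $P_p^*\geq P_o^*$ is exactly the first inequality asserted in Proposition~1. For the leftmost link $P_p^{\varepsilon_2}(\lambda_o^*,\cdot)\geq P_p^{\varepsilon_2}$, I would note that $P_p^{\varepsilon_2}(\lambda_o^*,\cdot)=\min\{\Phi^{\varepsilon_2}(\lambda_o^*,\bar\theta):\bar\theta\in\Theta,\ g(\lambda_o^*,\bar\theta)\leq 0\}$ is the same objective minimized over the restricted outer region in which $\lambda$ is pinned to $\lambda_o^*$; since $\lambda_o^*\in\Lambda$ and $\bar\theta=\theta_o^*$ satisfies $g(\lambda_o^*,\theta_o^*)\leq 0$, this restricted region is nonempty and contained in the full outer feasible set, and restricting the domain of a minimization can only raise its value.

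The only genuinely delicate point is the monotonicity step, and specifically that the inner feasible sets nest in the correct direction. This is where the nonnegativity $L_{S_{\text{train}}}(f(\bar\theta))\geq 0$ (already flagged just before the proposition) is essential: without it, a larger $\varepsilon$ could tighten rather than relax the bound and the nesting would reverse. Everything else is the consistent bookkeeping of two facts -- shrinking the domain of a $\min$ weakly raises its value, and enlarging the domain of a $\max$ weakly raises its value -- combined with the two citations to the preceding propositions for the endpoint identity and the endpoint inequality.
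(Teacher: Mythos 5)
Your proposal is correct and follows essentially the same route as the paper: both arguments rest on the pointwise monotonicity of the inner maximum in $\varepsilon$ (the paper phrases this via the chain $P_p^{\varepsilon_2}(\lambda_2,\bar\theta_2)\geq P_p^{\varepsilon_1}(\lambda_2,\bar\theta_2)\geq P_p^{\varepsilon_1}(\lambda_1,\bar\theta_1)$ through the optimal outer points, which is exactly your ``pointwise inequality plus minimization over the same outer feasible set'' step), with the endpoints delegated to Proposition~\ref{prop_pb_ob} and the definition of the $\varepsilon$-approximation. Your write-up is in fact slightly more explicit than the paper's in justifying the nesting of the inner feasible sets via $L_{S_{\text{train}}}(f(\bar\theta))\geq 0$, but this is a matter of detail rather than a different approach.
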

\begin{proof}
According to Proposition \ref{prop_pb_ob} and \eqref{general-P-relax-flip-eps:bilevel}, it is sufficient to prove that $P_p^{\varepsilon_2}\geq P_p^{\varepsilon_1}$. 

Let $P_p^\varepsilon(\lambda, \bar\theta)$ denote the optimal value of \eqref{general-P-relax-flip-eps:bilevel} subject to fixed $\varepsilon, \lambda$, and $\bar\theta$. Suppose that  $(\lambda_i, \bar\theta_i)$ is an optimal solution to the outer-level problem of \eqref{general-P-relax-flip-eps:bilevel} for $\varepsilon_i$, $i=1$ and 2 respectively. We have 
$$P_p^{\varepsilon_2} \equiv P^{\varepsilon_2}_p(\lambda_2, \bar\theta_2)\geq P^{\varepsilon_1}_p(\lambda_2, \bar\theta_2)\geq P_p^{\varepsilon_1}(\lambda_1, \bar\theta_1) \equiv P_p^{\varepsilon_1}.$$
The first inequality simply follows from the fact that the larger $\varepsilon$ the greater $\max_{\theta } L_{S_{val}}(f(\theta))$ for any given $(\lambda,\bar\theta)$. The second inequality is straightforward, as $(\lambda_1, \bar\theta_1)$ is an optimal solution with respect to $\varepsilon_1$. Moreover, as $\lambda_o^*$ might not be optimal to $ P_p^{\varepsilon_2}$, it follows that $P^{\varepsilon_2}_p(\lambda^*_o,\cdot)\geq P_p^{\varepsilon_2}.$ Hence, the expected result is obtained. 
\end{proof}
As Proposition 5 indicates, increasing the value of $\varepsilon$ results in a larger solution set for the inner-level problem (19). This means that we adopt a more conservative approach, which may degrade the performance on the validation set. However, having a larger optimal solution set for the candidate hyperplanes is beneficial when the test data comes from a shifted distribution that differs from the training/validation data. Although the exact shift amount is impossible to know in practice, it can be compensated by carefully choosing $\varepsilon$'s value. 

\section{Solution Methods to Pessimistic Bilevel Hyperparameter Tuning}\label{section:hinge}

In this section, we first present the optimistic view to the bilevel hyperparameter tuning problem under a specific convex loss function, namely hinge loss,  with a discussion of possible issues encountered under small-sample scenarios. Then, we present the new  pessimistic formulation and its solution method.

\subsection{Optimistic Reformulation}
In this paper, we focus on learning a linear SVM, where a box constraint on the model coefficients $w$ is added as a constraint:  
\ba{\small
        \underset{w\in\mathbb{R}^p,b,\xi}{\text{min.}}~ & \dfrac{1}{|T|} \sum_{i\in T} \xi_i \label{boxSVM}\\
        \text{s.t.}~& y_i(x_i^Tw-b) \geq 1-\xi_i, ~\forall i \in T \nonumber \\
        & \xi_i \geq 0 , ~\forall i \in T \nonumber \\
        & -\bar{w} \leq w \leq \bar{w}. \nonumber
}
Here $\bar{w}\in\mathbb{R}^p$ is a vector of hyperparameters that needs to be given \emph{a priori}, $T$ is the index set of training data, $w$ and $b$ are the parameters of the linear model, and $\xi$'s are the slack variables. This formulation has been studied by \citet{bennett2006model} and \citet{kunapuli2008bilevel}. Similar to the purpose of the box constraint in \eqref{boxSVM}, a hard constraint on $||w||_\infty$ corresponding to the $l_\infty$-norm regularization can be motivated for the robustness where it may provide defense against adversarial attacks in the dual $l_1$-norm. Careful determination of such bounds is therefore crucial to learn robust classifiers. 

Denote $V$ as the index set of the labeled validation data, $\lbrace (x_i ,y_i): i \in V \rbrace$, apart from the training data. The optimistic bilevel hyperparameter optimization problem is then formulated as:
\ba{\small
\label{prob:optim-hinge}
       \underset{\bar{w}, w,b}{\text{min.}}~ & \dfrac{1}{|V|}~\sum_{i\in V}[1-y_i(x_i^Tw-b)]_+ \\
        \text{s.t.}~ & \bar{w}^{\text{LB}} \leq \bar{w}\leq \bar{w}^{\text{UB}} \nonumber\\
        & (w,b) \in \underset{w',b',\xi}{\text{argmin.}}\lbrace \dfrac{1}{|T|}\sum_{i\in T}\xi_i: \xi_i \geq 1-y_i(x_i^Tw'-b'), \nonumber \\
        & \qquad \qquad \xi_i\geq0,~\forall i \in T, -\bar{w}\leq w'\leq \bar{w}\rbrace, \nonumber
}        
where $\bar{w}^{\text{LB}}$ and $\bar{w}^{\text{UB}}$ are the lower- and upper-bounds on the hyperparameters $\bar{w}$. In the outer-level objective function of \eqref{prob:optim-hinge}, we use the hinge loss, denoted as $[1-y(x^Tw - b)]_+ := \text{max}(0,1-y(x^Tw - b))$. 
As we are concerned with the performance comparison of the corresponding pessimistic bilevel formulation with respect to this optimistic formulation, we focus on the simple out-of-sample validation loss in \eqref{prob:optim-hinge}.


As the inner-level problem in \eqref{prob:optim-hinge} is an LP, it is possible that for some given outer-level hyperparameter values, the inner-level optimal solution set may have multiple elements. This problem would manifest, in particular under the small data regime when we have to split the limited data into training and validation sets for hyperparameter tuning. In this case, the derived inner-level model, even under optimized hyperparameters by solving \eqref{prob:optim-hinge}, may have a poor generalizability on testing data. 
Instead, we seek to optimize the hyperparameters so that the derived models still perform well with respect to an out-of-sample validation loss.



\subsection{Pessimistic View to Hyperparameter Tuning}

We now formulate the pessimistic view of the bilevel hyperparameter optimization problem as described in Section \ref{Section:PBL-general}:
%
\ba{\small
\underset{\bar{w}\in\mathbb{R}^p}{\text{min.}}~ & \underset{w\in\mathbb{R}^p,b}{\text{max.}}~ \dfrac{1}{|V|}~\sum_{i\in V}[1-y_i(x_i^Tw-b)]_+ \label{PBL-hyper}\\
        \text{s.t.}~& \bar{w}^{\text{LB}} \leq \bar{w}\leq \bar{w}^{\text{UB}} \nonumber \\
        & (w,b)\in \underset{w',b',\xi}{\text{argmin.}} \lbrace \dfrac{1}{|T|}\sum_{i\in T}\xi_i:\xi_i \geq 1-y_i(x_i^Tw'-b'), \nonumber \\
        &\qquad \qquad \xi_i\geq0,~\forall i \in T, -\bar{w}\leq w'\leq \bar{w}\rbrace. \nonumber
}

Now, a relaxation for the problem in \eqref{PBL-hyper} can be obtained by replicating the inner-level decision variables as well as the constraints to the outer level with further incorporation of the model uncertainty:
\begin{equation}
\label{RPBL-hyper}
    \begin{aligned}
    \underset{\bar{w},\hat{w},\hat{b},\hat{\xi}}{\text{min.}}~\underset{w,b,\xi}{\text{max.}}~&\dfrac{1}{|V|}\sum_{i\in V} [1-y_i(x_i^T w - b)]_+ \\
    \text{s.t.}~~ &\bar{w}^{LB} \leq \bar{w} \leq \bar{w}^{UB} \\
    & -\bar{w} \leq \hat{w} \leq \bar{w} \\
    & -\bar{w} \leq w \leq \bar{w} \\
    & \hat{\xi}_i \geq 1-y_i(x_i^T \hat{w} - \hat{b}), ~ \forall i \in T\\
    & \hat{\xi}_i \geq 0,~ \forall i \in  T \\
    & \xi_i \geq 1-y_i(x_i^T w - b), ~ \forall i \in T \\
    & \xi_i \geq 0,~ \forall i \in T \\
    & \dfrac{1}{|T|}\sum_{i \in ~ T} \xi_i \leq \dfrac{1+\varepsilon}{|T|}\sum_{i \in T} \hat{\xi}_i.
    \end{aligned}\vspace{-1mm}
\end{equation}
In this formulation, $(\hat{w},\hat{b},\hat{\xi})$ are the replicated inner-level decision variables.

As detailed in Theorem \ref{thm:flipping}, we can convert non-concave maximization problem at the inner level of \eqref{RPBL-hyper} into a classical ERM framework. This conversion was exact if we had used 0-1 loss at the outer level of \eqref{RPBL-hyper}. However, we approximate it by a surrogate convex loss function, namely the hinge loss function $\text{max}(0, 1-y(x^Tw-b))$, to solve the problem to global optimality.
This convex approximation creates a mismatch with the intuition behind the flipping operation in Theorem \ref{thm:flipping} because of the introduced margin with the hinge loss. Therefore, we specialize the flipping operation to handle this mismatch by flipping the samples corresponding to the indices in the set $V_f$ as given in Corollary~\ref{corol:flip}. 
This provides a more robust model as the separating hyperplane is more susceptible to the possible perturbations of those samples closer to the decision boundary. Indeed, the data with flipped labels acts as adversarial examples in the validation set. In this implementation, the margin needs to be determined in advance. Hence, we first solve the optimistic bilevel hyperparameter optimization problem, and then compute the margin, $2/||w||$, with respect to the decision hyperplane obtained by solving \eqref{prob:optim-hinge}. Then $V_f$ is determined based on the optimistic hyperplane and the corresponding margin. 


With these operations, we finally derive our approximate formulation to \eqref{RPBL-hyper}, which is adopted in our hyperparameter tuning implementation: 
\begingroup
\allowdisplaybreaks
\ba{\small
\underset{\bar{w},\hat{w},\hat{b},\hat{\xi}}{\text{min.}} ~ &\dfrac{1}{|V|}\sum_{i\in V} [1-y_i(x_i^T w - b)]_+ \label{A-RPBL-hyper}\\
    \text{s.t.}~~ &\bar{w}^{LB} \leq \bar{w} \leq \bar{w}^{UB} \nonumber \\
    & -\bar{w} \leq \hat{w} \leq \bar{w} \nonumber \\
    & \hat{\xi}_i \geq 1-y_i(x_i^T \hat{w} - \hat{b}), ~ \forall i \in T \nonumber \\
    & \hat{\xi}_i \geq 0,~ \forall i \in T \nonumber \\
    & (w,b,v,\xi) \in \underset{v_i \geq 0,\forall i \in V_f,~\xi_i \geq 0, \forall i \in T}{\text{arg min.}}\big\lbrace \dfrac{1}{|V_f|}\sum_{i\in V_f} v_i \nonumber \\
    & \qquad  \qquad \qquad v_i \geq 1 - \bar{y}_i (x_i^T w- b),~ \forall i \in ~ V_f \nonumber \\
& \qquad  \qquad \qquad -\bar{w} \leq w \leq \bar{w} \nonumber \\
    & \qquad  \qquad \qquad \xi_i \geq 1-y_i(x_i^T w - b), ~ \forall i \in T \nonumber \\
 & \qquad  \qquad \qquad \dfrac{1}{|T|}\sum_{i \in ~ T} \xi_i \leq \dfrac{1+\varepsilon}{|T|}\sum_{i \in ~ T} \hat{\xi}_i\big\rbrace. \nonumber 
}
\endgroup
$V_f$ is the index set of the flipped points, $\bar{y}_i$ is the flipped label for the $i$-th example. 
As the optimization problem in \eqref{A-RPBL-hyper} is a standard bilevel linear optimization problem, we convert it to a single-level formulation through the KKT conditions as follows: 
\newcommand\numberthis{\addtocounter{equation}{1}\tag{\theequation}} 
\begingroup
\allowdisplaybreaks
   \begin{align*}
    \text{min.}~~ & \dfrac{1}{|V|} \sum_{i\in V} g_i \\
    & \text{\textit{Outer-level constraints}} \\
    \text{s.t.}~~ & g_i \geq 1-y_i(x_i^Tw-b), ~~ g_i\geq 0,~\forall i \in V \\
    & \bar{w}^{LB} \leq \bar{w} \leq \bar{w}^{UB}  \label{myeq:single-hyper} \\
    & -\bar{w} \leq \hat{w} \leq \bar{w} \\
    & \hat{\xi}_i \geq 1-y_i(x_i^T\hat{w}-\hat{b}), ~~ \hat{\xi}_i \geq 0, ~\forall i \in T \\
    & \text{\textit{Primal constraints}} \\
    & v_i \geq 1-\bar{y}_i(x_i^Tw-b), ~~ v_i\geq 0,~\forall i \in V_f \\
    & -\bar{w} \leq w \leq \bar{w} \\
    & \dfrac{1}{|T|} \sum_{i \in T} \xi_i \leq \dfrac{1+\varepsilon}{|T|} \sum_{i \in T} \hat{\xi}_i \\
    & \xi_i \geq 1-y_i(x_i^Tw-b), ~~ \xi_i\geq 0,~\forall i \in T \\
    & \text{\textit{Stationary Constraints}} \\
    & \alpha_i \leq \dfrac{1}{|V_f|},~\forall i \in V_f \\
    & - \sum_{i\in V_f} \alpha_i \bar{y}_i x_{i,j} - \sum_{i\in T} \beta_i y_i x_{i,j} + \mu^+_j - \mu^-_j = 0,~\forall j=1,\dots,D \\
    & \sum_{i\in V_f} \alpha_i \bar{y}_i + \sum_{i\in T} \beta_i y_i = 0 \\
    & \dfrac{\lambda}{|T|} \geq \beta_i,~\forall i \in T \\
        &\text{\textit{Complementary Slackness Constraints}} \\
    &(\dfrac{1}{|V_f|} - \alpha_i)v_i = 0, ~ \forall i \in V_f \\
    & \alpha_i (v_i + \bar{y}_i(x_i^T w-b)-1) = 0, ~ \forall i \in V_f \\
    & \beta_i (\xi_i + y_i(x_i^T w-b)-1) = 0, ~ \forall i \in T \\
    & (\dfrac{\lambda}{|T|} - \beta_i)\xi_i = 0, ~ \forall i \in T\\
    & \mu^+(\bar{w}-w) = 0 \\
    & \mu^-(\bar{w}+w) = 0 \\
    & \lambda \big(\dfrac{1}{|T|} \sum_{i\in T} ((1+\varepsilon)\hat{\xi}_i - \xi_i)\big) = 0\\
    & \alpha_i \geq 0,~\forall i \in V_f, ~ \beta_i \geq 0,~\forall i \in T, \\
    & \lambda \geq 0, \mu^+,\mu^- \geq 0 \numberthis,
\end{align*}
\endgroup
where $\alpha, \beta, \lambda, \mu^+$ and $\mu^-$ are dual variables. The complementary slackness equations are then linearized by using the big-M technique. Namely, for a complementary slackness equation $p q = 0, ~ p\geq 0,~ q\geq 0$, we can linearize it by adding a binary decision variable $z$ with the constraints $0\leq p \leq Mz,~0\leq q \leq M(1-z)$. In \eqref{myeq:single-hyper}, we have on the order $O(|V_f|+|T|+ p)$ complementary slackness equations, where $p$ is the feature dimension. By further applying the \textit{big-M} technique, the final single-level problem will have on the order $O(|V_f|+|T|+p)$ binary decision variables and the corresponding nonlinear constraints. With that, the problem can be solved with existing solvers such as {\tt CPLEX}.

\section{Results \& Discussions}


We have tested our models under two scenarios: limited and perturbed/shifted data. In Section~\ref{UCI}, we present experiments on two real-world classification data sets, Wisconsin Breast Cancer data set (`Cancer') and PIMA Indians Diabetes data set (`Diabetes'), from the UCI Repository for Machine Learning \citep{Dua:2019}. Here, the training and validation sets have limited number of samples. We also present the results when test data are adversarially perturbed. In Section~\ref{TF}, we further test the scenario in which the test data are different from the training data. Specifically, we use a Convolutional Neural Network (CNN) model previously trained on MNIST as a feature extractor and compare the performances of the optimistic and pessimistic formulations as classification models on the FashionMNIST data set. All the implementations are on {\tt Python 3.7.4}. with the optimization problems modeled by {\tt AMPL Python API 2.0} and solved by {\tt CPLEX}. More implementation details of our experiments are provided in Appendix \ref{appdx:implementationDetails}.

\begin{figure*}[b!]
    \centering
    \subfigure[Cancer]{\label{fig:ratio-BC}\includegraphics[width=0.495\columnwidth]{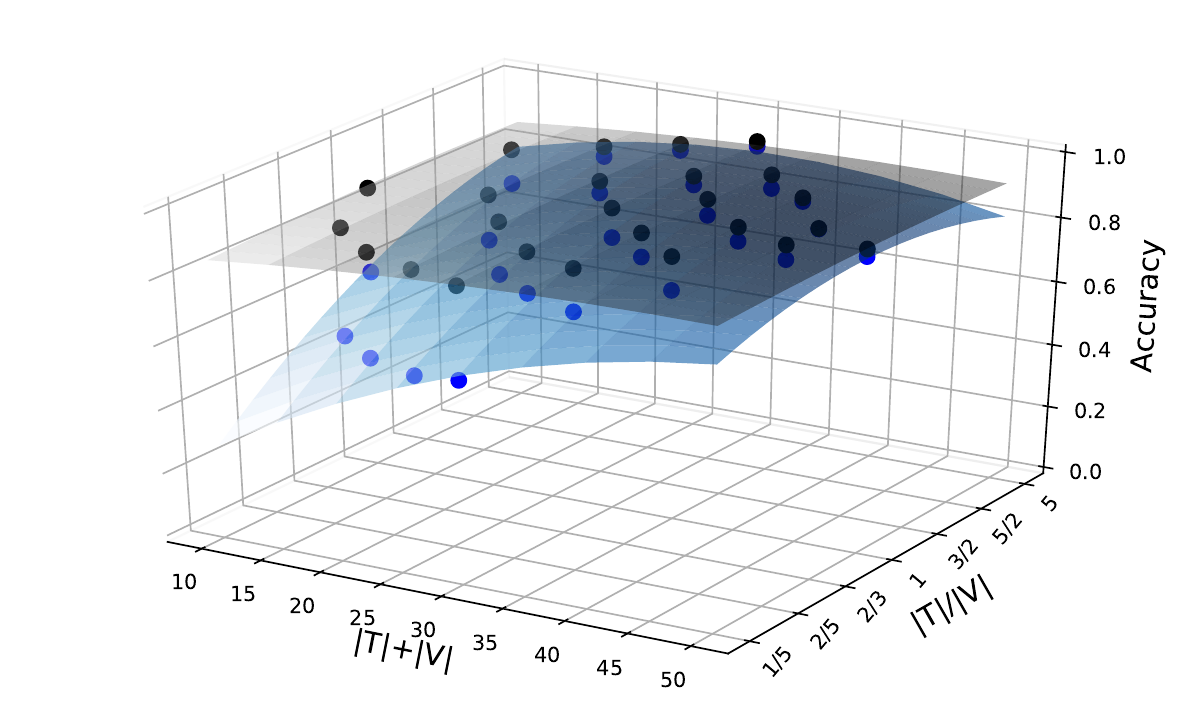}}
    \subfigure[Diabetes]{\label{fig:ratio-PIMA}\includegraphics[width=0.495\columnwidth]{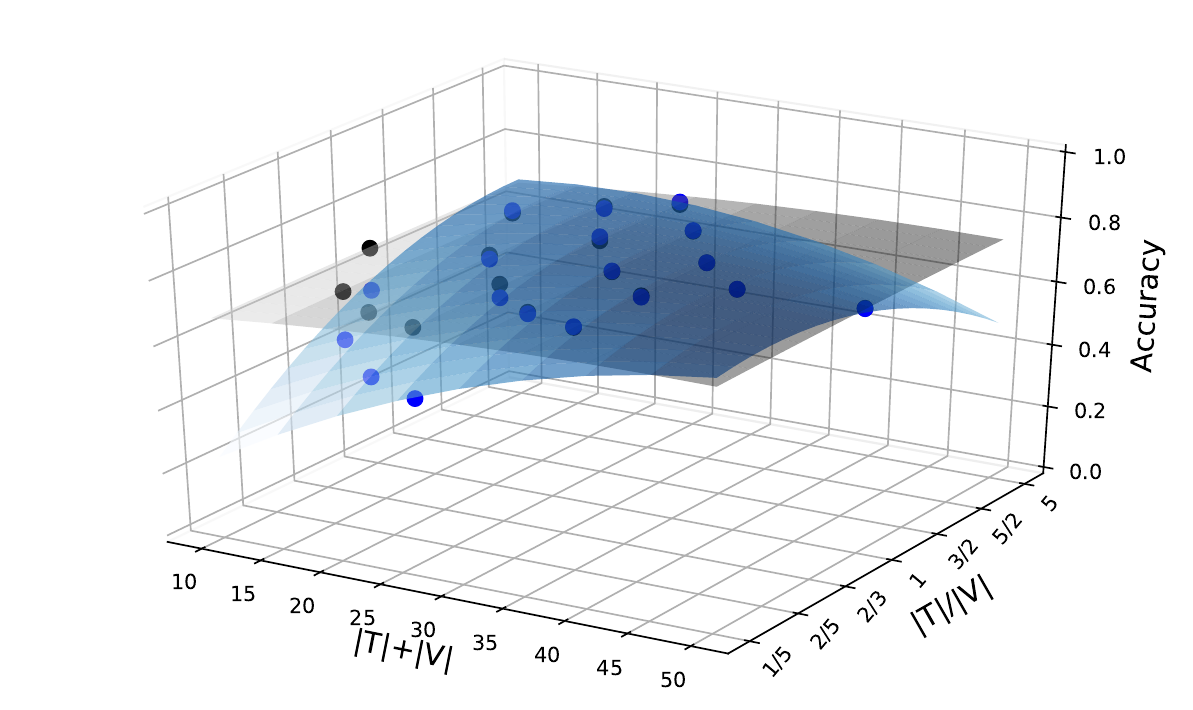}}\vspace{-3.5mm}
    \caption{Comparison of pessimistic and optimistic solutions by the average accuracy with respect to varying total sample size ($|V|+|T|$) and training to validation splitting ratio ($|T|/|V|$); \textbf{Blue circles:} performance results of optimistic models, \textbf{Black circles:} performance results of pessimistic models.}
    \label{fig:3D-splitRatio}\vspace{-3.5mm}
\end{figure*}

\subsection{Experiments with UCI Data Sets}\label{UCI}

As a pre-processing step, duplicated entries are removed from the data sets in our experiments. All the data are normalized to have 0 mean and unit variance. We have conducted the experiments for multiple runs with random training-validation-test splits, and the average performances are reported. Also, the training-validation-test splits are consistent for the corresponding experiments under optimistic and pessimistic cases for fair comparison. We hold out the half of each data set as the test set in all experiments, so that both the optimistic and the pessimistic bilevel hyperparameter optimization methods do not see the held-out test samples. We then sample the remaining samples in each data set as the corresponding training and validation sets. During the training-validation-test splits, the label distribution of the data set is preserved. All the experimental runs are based on random sampling of training, validation and test sets. Table \ref{tab:datasetsAll} summarizes the data set characteristics. 
\begin{table}[b!]
\caption{Summary of data characteristics.}
\begin{center}
\begin{small}
\begin{sc}
    \begin{tabular}{c c c c }
    \toprule
    Data set & 
    Total Sample Size & $|\text{Test Set}|$ & Feature dimension \\ \hline 
    Diabetes & 768  & 384 & 8  \\
    Breast Cancer & 449 & 225 & 9 \\
    \bottomrule
    \end{tabular}
\end{sc}
\end{small}
\end{center}
\label{tab:datasetsAll}
\end{table}


\subsubsection{Experiments with Limited Data}\label{smallData}
We first conduct experiments to see the effect of two formulations of bilevel hyperparameter tuning,  optimistic and pessimistic views, when we have  limited amount of training data. As we have discussed previously, under the small-sample scenario, we argue that the optimistic selection of the hyperparameters at the outer level would overly trust the inner-level solution, which may overfit the limited data. In contrast, the pessimistic formulation, by hedging against uncertainty introduced by the limited number of the training data, may obtain more robust solutions, which better capture the underlying distribution of the data and thus perform better on unseen data.

In Figure \ref{fig:3D-splitRatio}, we present the average test set accuracy results on multiple runs with random training-validation-test splits for varying total number $|V|+|T|$ of validation and training data and split ratio of $|T|/|V|$. Here, the the blue and black circles represent the average testing accuracy from ten random runs with a specific total sample size and split ratio for optimistic and pessimistic models, respectively. To better visualize the performance trends, quadratic polynomials are fitted to the corresponding results for comparison based on the blue and gray fitted surfaces. 

For both Cancer and Diabetes data sets, pessimistic solutions perform consistently better than the optimistic ones, providing higher average testing accuracy. Along both the total sample size and split ratio axes, pessimistic solutions demonstrate significantly better stability thanks to better interactions between outer- and inner-level DMs to handle potential model uncertainty. 
It is also clear that when the total sample size, $|V|+|T|$, is larger than 50, the derived optimistic and pessimistic solutions converge in the conducted experiments as both formulations have the same unique inner-level solution. 
From both plots, with the fixed split ratio, for example $|T|/|V|=1$, the performances of pessimistic solutions decrease much slower as the size of the training set decreases than the optimistic ones. Such a trend indicates that the pessimistic solutions better handle the non-uniqueness of inner-level solutions, especially when $|T|$ is small. Therefore, models achieved by pessimistic bilevel optimization can avoid potential overfitting and are more data-efficient in this small data regime.  

Another noticeable observation from these experiments is that the models obtained by solving pessimistic bilevel optimization is much less sensitive to the training and validation split ($|T|/|V|$), again due to better outer- and inner-level interactions. The optimistic solution depends heavily on the solution quality of the inner-level training model, which naturally requires a larger number of training data to derive generalizable solutions. This can be observed clearly based on the blue circles in the back left corners of the blue surfaces, indicating the performance of the optimistic model degraded significantly when we have a very small training set. For example, in Figure \ref{fig:ratio-BC}, when $|V|+|T|=30$, optimistic solutions have significantly lower average testing accuracy with $|T|/|V|=1/5$; however, the performance improves significantly when we assign more data to the training set when $|T|/|V| \geq 1/2$. Similar behavior is also observed in the Diabetes data set in Figure \ref{fig:ratio-PIMA}. In contrast, pessimistic hyperparameter tuning takes better advantage of validation data and can achieve more stable and better prediction performance with different split ratios with small total sample size, $|V|+|T|$.

\begin{figure*}[ht]
\centering     
    \subfigure{\label{fig:small-BC-val5}\includegraphics[width=0.49\columnwidth]{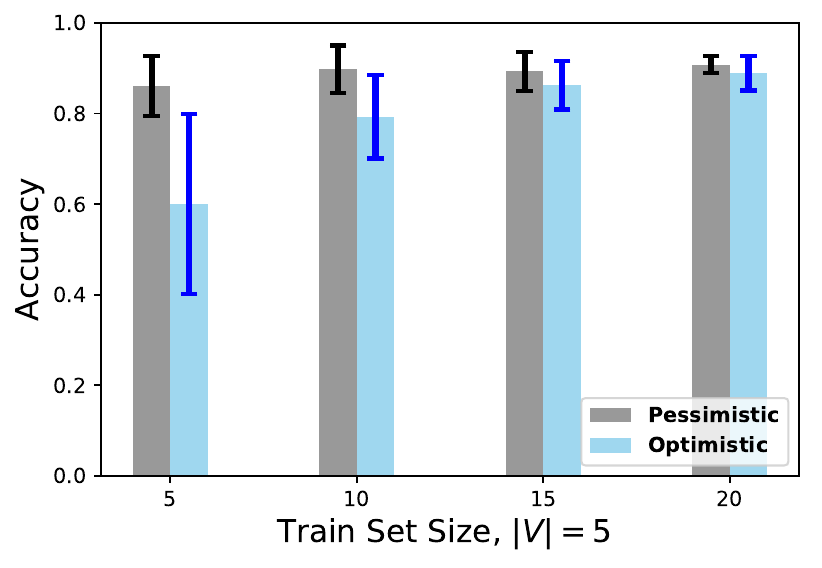}}
    \subfigure{\label{fig:small-BC-val20}\includegraphics[width=0.49\columnwidth]{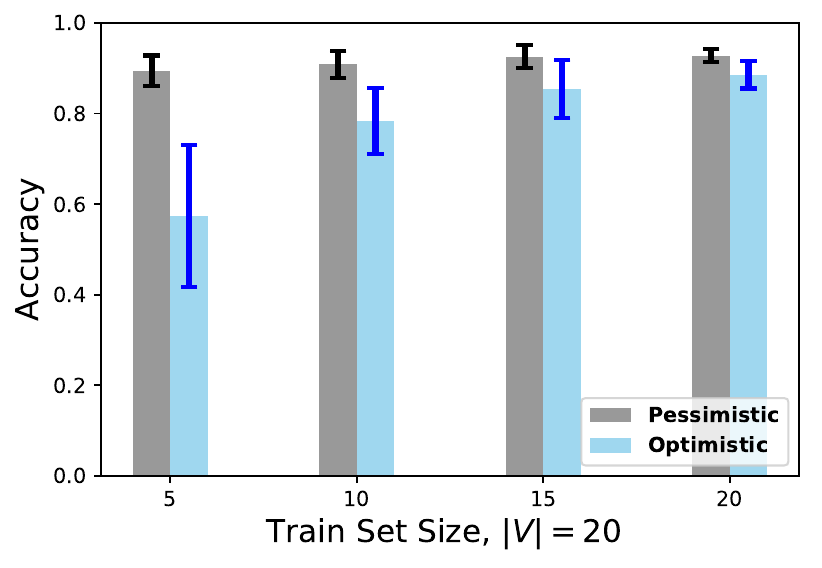}} \\
    \subfigure{\label{fig:small-BC-tr5}\includegraphics[width=0.49\columnwidth]{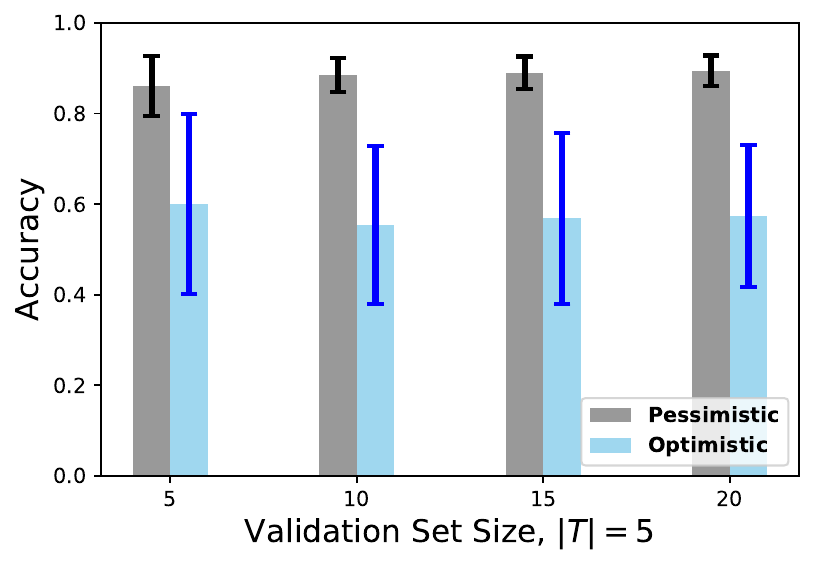}}
    \subfigure{\label{fig:small-BC-tr20}\includegraphics[width=0.49\columnwidth]{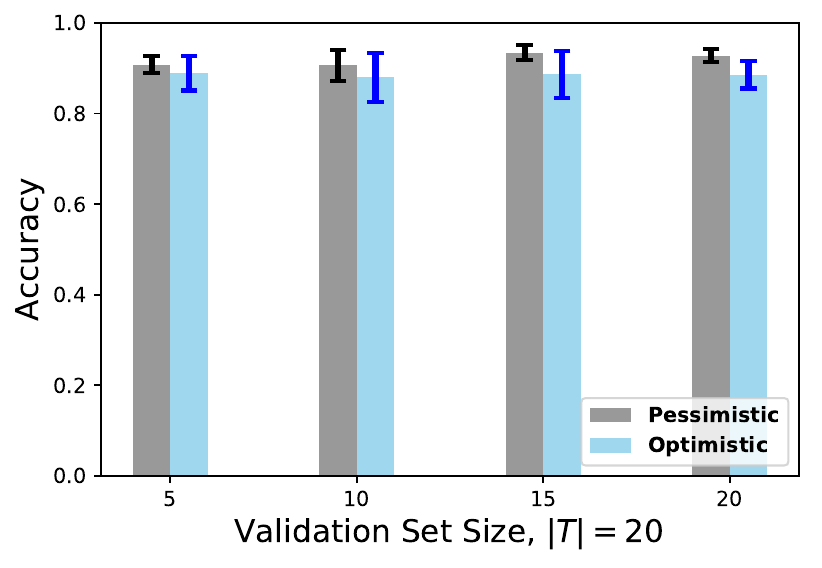}}\vspace{-4mm}
\caption{Average testing accuracy for varying training and validation set sizes for the Cancer data set}
\label{fig:small-accVStrsize}\vspace{-3mm}
\end{figure*}



In Figure \ref{fig:motivEx}, we have plotted the average testing accuracy and the corresponding standard deviation values for varying training set sizes when $|V|=10$ in the Cancer data set. More comprehensive results are provided in Figure \ref{fig:small-accVStrsize} with varying validation and training set sizes. First, increasing the training or validation set size while keeping the other fixed leads to less variability in both pessimistic and optimistic models' performances. From Figures \ref{fig:motivEx} and \ref{fig:small-accVStrsize}, we observe that the optimistic model behaves poorly when $|T|$ is small, and its performance improves as training set size increases for a fixed validation set size. The two plots on the right of Figure \ref{fig:small-accVStrsize} show that the optimistic models rely on good inner-level solutions, and its performance depends heavily on the number of training data. In Figure \ref{fig:small-accVStrsize}, increase in validation data itself is not enough for optimistic models to better generalize to unseen test data. Additional results on the Diabetes data set can be found in Appendix~\ref{appdx:experimentsMNIST}, which show similar trends.

In summary, our empirical studies support that when we have limited training data, due to the inner-level model uncertainty and a higher risk of overfitting, we need more robust hyperparameter solutions to have better performance on the unseen test data. By taking a pessimistic view to protect against unexpected inner-level model behavior, we derive more robust and generalizable models in terms of testing accuracy. More importantly, we observe that the pessimistic model is less sensitive to training and validation splits. Training data are generally overused to tune the hyperparameters for a better generalization performance, i.e., in cross-validation. We believe that our pessimistic model's stable behavior to validation/training splits  motivates new research for training robust model  that is less affected by random splits, which are often done manually as a mysterious art.

\subsubsection{Experimetns with Perturbed Data}\label{perturbedData}
To further investigate the generalizability of the pessimistic and optimistic bilevel hyperparameter tuning, we perform experiments under different scenarios in which the test data are different from training data. In particular, we adversarially perturb the test data by using the {\tt SecMl} Python package \citep{melis2019secml} to create systematic attacks. We test two scenarios to simulate the setups corresponding to adversarial learning and meta learning (or few-shot learning) respectively: in the first setup, we keep the validation set unperturbed so that both model training and hyperparameter tuning do not see the test data; in the second setup, we perturb the validation set to allow hyperparameter tuning to have perturbed data too (the perturbations to the validation and test data can be different).  

We note here that our main interest is not on creating adversarial attacks, and we have used {\tt SecMl} simply as a way to perturb the data at hand. In {\tt SecMl}, adversarial evasion attacks are created to manipulate the samples in the test and/or validation sets. To generate those attacks, we need a trained model, for which we take a classical soft SVM trained using all the training data (the validation and test samples are not used here). At the back end, the attacks are generated by the gradient-based maximum-confidence algorithm \citep{biggio2013evasion}. The perturbations $\Delta$ are constrained to be bounded by $\rho$ in $l_2$-norm, $||\Delta||\leq \rho$. In our experiments, the attacked samples are chosen to be the ones in the margin of the classifying hyperplane obtained by solving the optimistic bilevel hyperparameter optimization problem in \eqref{prob:optim-hinge}. We conduct the experiments to observe how the pessimistic model's performance compares to that of the optimistic counterpart when the test set is different from the training data. Such difference is captured by perturbing the test set with different upper bounds, $\rho$'s, on both UCI data sets. In all the experiments, we only attack the test data that are inside the margin regions of the corresponding hyperplanes derived by solving the optimistic bilevel optimization. 


The average testing accuracy of the derived models by pessimistic formulation and the evaluation of the models derived by the optimistic one, detailed in Appendix~\ref{appdx:optimizationQuality}, are compared for various perturbation levels, $\rho$, and suboptimality rates, $\varepsilon$, in Figures \ref{fig:3Dbar-BC-testDmax_ebs} and \ref{fig:3Dbar-PIMA-testDmax_ebs}.  
In these experiments, $\rho=0$ corresponds to the cases when there is no perturbation to the data sets. The reported average results in this section are computed for five runs with random training-validation-test splits.

\begin{figure}[]
\centering
	\subfigure[Cancer]{\label{fig:cleanVal-BC}\includegraphics[width=0.493\columnwidth]{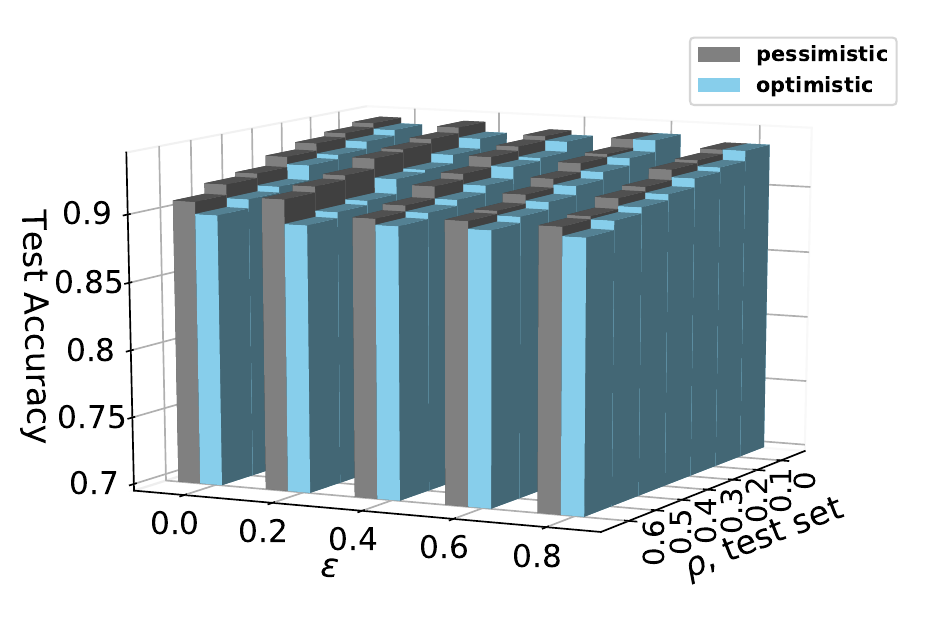}}
	\subfigure[Diabetes]{\label{fig:cleanVal-PIMA}\includegraphics[width=0.493\columnwidth]{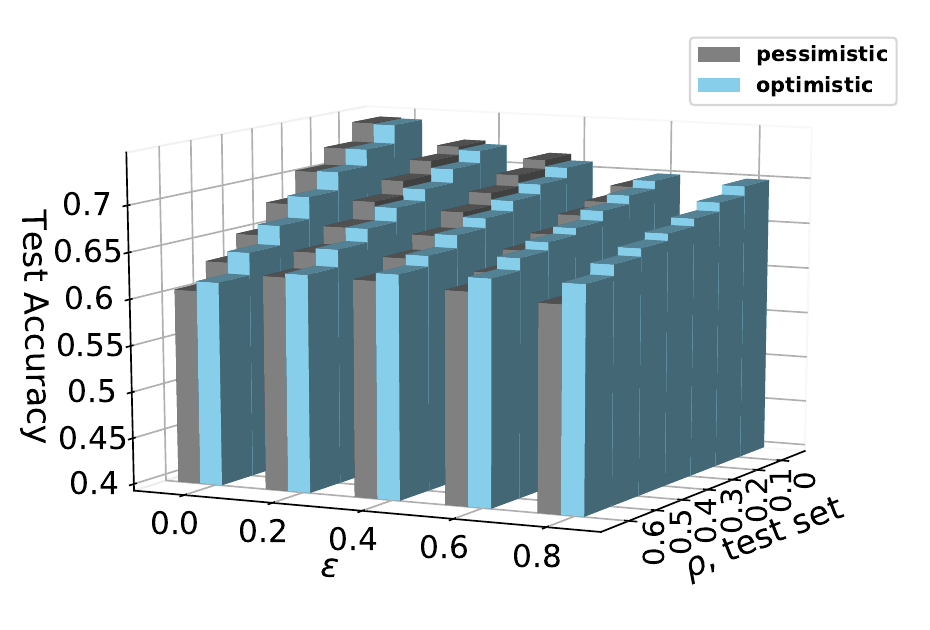}}\vspace{-5mm}
\caption{Testing accuracy comparison for various values of $\varepsilon$ and perturbation bound $\rho$ on $||\Delta||$ with clean validation data.}
\label{fig:3Dbar-BC-testDmax_ebs}\vspace{-5mm}
\end{figure}

Figures \ref{fig:3Dbar-BC-testDmax_ebs} and \ref{fig:3Dbar-PIMA-testDmax_ebs} present the resulting average testing accuracy for the Cancer and Diabetes data sets when the validation set is kept clean or perturbed with $\rho=0.3$ ($||\Delta||\leq 0.3$). 
In both figures, the testing accuracy of optimistic and pessimistic models are very close when $\varepsilon=0$, indicating we have obtained the optimal training models except Figure \ref{fig:adverVal-BC}. As expected, the testing accuracy of both optimistic and pessimistic solutions decreases with as the perturbation levels to test data increase, especially in the Diabetes data set (Figures \ref{fig:cleanVal-PIMA} and \ref{fig:adverVal-PIMA}). 

In Figure \ref{fig:cleanVal-BC}, superiority of pessimistic models manifests itself with $\varepsilon \leq 0.4$ when testing on the perturbed test data in the Cancer data set. The performance difference between optimistic and pessimistic models becomes smaller with large $\varepsilon$. For the Diabetes data in Figure \ref{fig:cleanVal-PIMA}, only the pessimistic solutions with $\varepsilon \leq 0.4$ perform better when the test data have small perturbations ($\rho\leq 0.2$). This setup emulates the adversarial learning scenarios. These results indicate that adverserial learning is challenging, and we may need to impose appropriate inner-level model uncertainty to achieve robust pessimistic solutions by adjusting $\varepsilon$ based on the difference between training and test data. Enforcing large $\varepsilon$ values may blur signals from training data and lead to worse models. 

In Figure \ref{fig:adverVal-BC}, we observe pessimistic models significantly outperform optimistic ones when $\varepsilon=0$ with attacked test data, indicating that pessimistic solutions can take better advantage of the validation set that is similarly perturbed as the test data. More importantly, for both data sets in Figures \ref{fig:adverVal-BC} and \ref{fig:adverVal-PIMA}, pessimistic solutions perform consistently better than optimistic ones when validation data are also perturbed, especially when a suitable value for $\varepsilon$ can be chosen. This demonstrates the promising potential of pessimistic hyperparameter optimization in few-shot learning, online learning, or meta learning when a limited number of new data can serve as the validation set to help fine tune the learned models.

\begin{figure}[h!]
\centering
    \subfigure[Cancer]{\label{fig:adverVal-BC}\includegraphics[width=0.493\columnwidth]{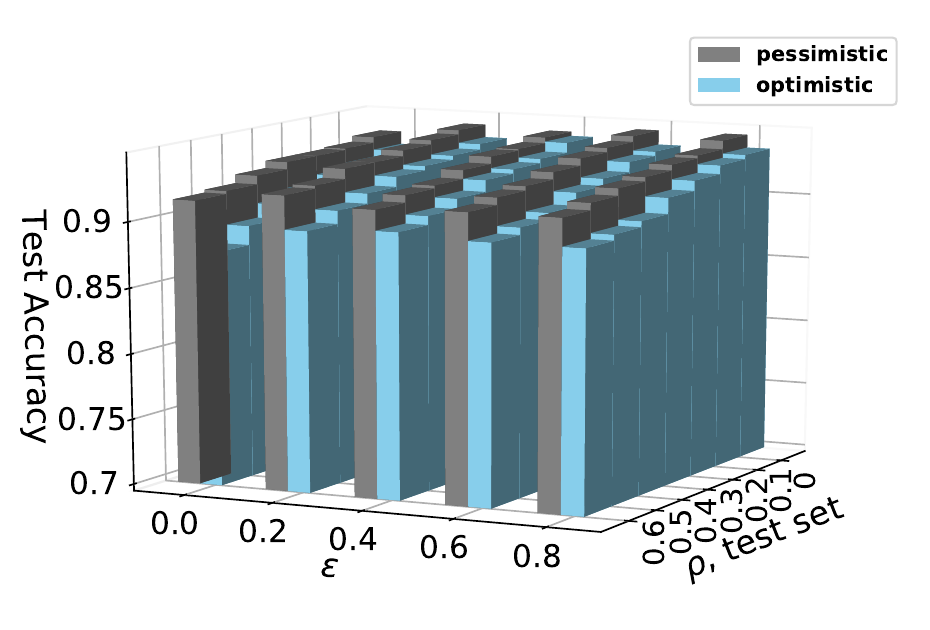}}
    \subfigure[Diabetes]{\label{fig:adverVal-PIMA}\includegraphics[width=0.493\columnwidth]{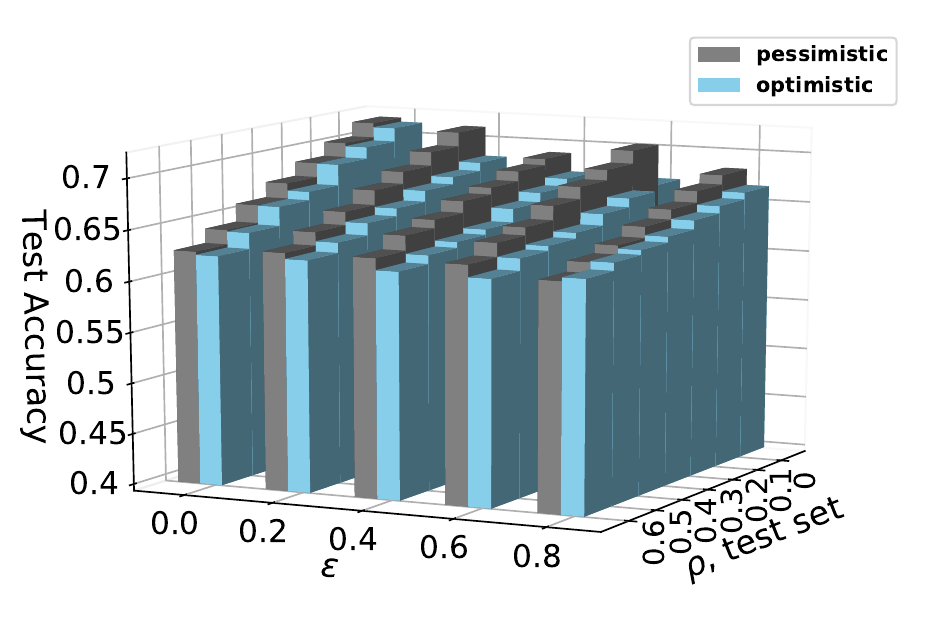}}\vspace{-5mm}
    \caption{Testing accuracy comparison for various values of $\varepsilon$ and perturbation bound $\rho$ on $||\Delta||$ with perturbed validation data.}
    \label{fig:3Dbar-PIMA-testDmax_ebs}\vspace{-5mm}
\end{figure}

\subsection{Experiments with MNIST \& FashionMNIST}\label{TF}
To further investigate the generalizability of the pessimistic and optimistic bilevel hyperparameter tuning, we perform experiments on transfer learning, in which test data are different from training data. In particular, we first train a CNN model on the MNIST benchmark. The model has three convolution layers of $3\times3$ filters with the ReLU activation. The first and last convolution layers are followed by a max pooling layer of $2\times2$ filters and then a fully connected layer with the dropout rate at $0.25$. The output size of the last layer before the classification layer is $100$. The last layer has the softmax activation function with $10$ outputs for the corresponding digits. The model is optimized by the Adam optimizer with a learning rate of $0.0001$ for 15 epochs. While the model achieves an accuracy of 0.989 on MNIST, this rate drops significantly to 0.875 on FashionMNIST when the exactly same model is used. 

We use the aforementioned CNN model without the last classification layer as a feature extractor with $100$ features. We randomly sample $50$ examples for training and $50$ examples for validation sets from the images of the digits `1' and `7' uniformly. We have conducted the experiments for five runs with random training-validation splits, and the testing accuracy on FashionMNIST is evaluated for optimistic and pessimistic hyperparameter tuning. The performance of the pessimistic models is compared to that of the evaluated optimistic models under the worst-case scenario for a given suboptimality rate, $\varepsilon$. Appendix~\ref{appdx:optimizationQuality} details this procedure. In short, with larger $\varepsilon$ values, suboptimal but neighboring models obtained by the training problem are considered, and we investigated how the optimistic models would have performed under the worst-case scenario. Hence, $\varepsilon=0$ does not allow such suboptimality, and shows comparison of the performance of both models. The average accuracy results are presented at various levels of suboptimality $\varepsilon$ in Figure~\ref{fig:TF-fashion}. Incorporating such $\varepsilon$-suboptimality would be valuable if the training and test set are from different data distributions or in face of adversarial scenarios.

\begin{figure}[h]
\begin{center}
\centerline{\includegraphics[width=0.60\columnwidth]{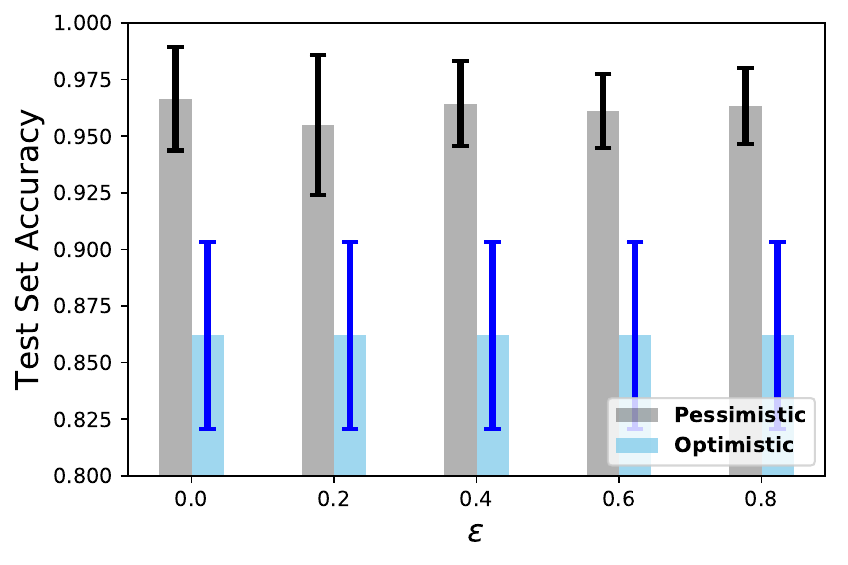}}\vspace{-3mm}\vspace{-2mm}
\caption{Comparison of pessimistic and optimistic solutions by the average testing accuracy of five runs with random splits between training and validation sets.}
\label{fig:TF-fashion}\vspace{-3mm}
\end{center}
\vskip -0.2in
\end{figure}
Clearly, there manifest significant differences between the performances of the optimistic and pessimistic models. While the pessimistic formulation achieves an average accuracy of $0.97$ by multiple trials on FashionMNIST, the optimistic model has an average accuracy around $0.86$ with a higher variation across $\varepsilon$'s. 

In \ref{appdx:experimentsMNIST}, we report some example of FashionMNIST test images that are classified correctly by the pessimistic model and misclassifed by the optimistic model.

In summary, considering different machine learning tasks when test data may be different from training data, we can benefit from our pessimistic bilevel hyperparameter optimization when the uncertainty of the learned model is appropriately incorporated. It is clear that, compared to commonly adopted optimistic solutions in the current AutoML practice, it can take better advantage of additional validation data if they are similar as test data, again thanks to better feedback from  the inner-level training model to the outer-level hyperparameter selection. 

\section{Conclusions \& Future Research}

In this paper, we have proposed the first \textbf{PBL} formulation for hyperparameter tuning and have shown that it is necessary to consider this pessimistic view, especially when learning complex models with limited data or perturbed test data from training. We have analyzed and demonstrated why commonly adopted optimistic hyperparameter tuning can be problematic. To solve challenging three-level \textbf{PBL} problem for hyperparameter tuning in binary classification, a computationally practical procedure was developed based on strong relaxation and approximation. By conducting extensive experiments on two UCI data sets, we have observed that pessimistic solutions to the bilevel hyperparameter tuning problems are robust and have better prediction performances on unseen data. Such robustness was particularly observed in the case where we can only access limited amount of training data. In such case, the inner-level model involves more uncertainty and has a high risk of overfitting. It is also observed that our pessimistic solution can take better advantage of available data than the optimistic counterpart in terms of being less sensitive to the split of training and validation sets thanks to feedback interactions between the outer- and inner-level DMs. Moreover, from our observations on perturbed test data, pessimistic models can achieve better performance when the unseen test data follow different or perturbed distributions from that of the training data. 

Future research directions involve developing more general, scalable and efficient \textbf{PBL} solutions, in particular considering the scenarios when the inner-level global optimality is difficult to guarantee. With that, \textbf{PBL} can become practical tools in AutoML, especially in few-shot learning and online learning where challenges due to small sample size may arise; and in meta learning and adversarial learning problems where the test data are different from the training data.

\acks{The presented work was supported in part by the National Science Foundation (NSF) Awards 1553281, 2212419 and 2215573, as well as the U.S. Department of Energy, Office of Science, Office of Advanced Scientific Computing Research, Mathematical Multifaceted Integrated Capability Centers program under Award B\&R\# KJ0401010/FWP\# CC130. }





\newpage
\appendix
\onecolumn

\newcounter{defcounter}
\setcounter{defcounter}{0}
\newenvironment{myequation}{%
\addtocounter{equation}{-1}
\refstepcounter{defcounter}
\renewcommand\theequation{A\thedefcounter}
\begin{equation}}
{\end{equation}}





\section{Implementation Details of Our Experiments}\label{appdx:implementationDetails}

All the implementations on {\tt Python 3.7.4}. with the optimization problems solved by {\tt CPLEX}. All the reported experiments are based on the implementations on a personal computer with an Intel Core i7 processor at 2.9GHz, running {\tt macOS Mojave version 10.14.6}. In all of the experiments with UCI data sets, we have set $\bar{w}^{UB}=1$ and $\bar{w}^{LB}=0$ for the box constraints on hyperparameters, and $\bar{w}^{UB}=2.5$ and $\bar{w}^{LB}=0$ for MNIST and FashionMNIST. 

In the toy experiments with UCI data sets (Section~\ref{smallData} in the main text), ten random runs are performed for the random training-validation-test splits. The training and validation sizes are self-expressing in the results of this section. During these experiments, for the Cancer data set, we have flipped the class labels of all the points in the randomly sampled validation set for each run. For the Diabetes data set, we have flipped the labels of the points in the margin of the optimistic hyperplane as well as the misclassified ones when the training size is larger than $20$ and the validation size is larger than $15$. For the other configurations, we have flipped the labels of all the points in the validation set. 

\section{Additional Experimental Results}

\subsection{Optimization Solution Quality}\label{appdx:optimizationQuality}

\renewcommand{\thefigure}{A\arabic{figure}}

\setcounter{figure}{0}

\begin{figure}[b!]
\begin{center}   
    \subfigure[Cancer]{\label{fig:evalu-BC}\includegraphics[width=0.45\columnwidth]{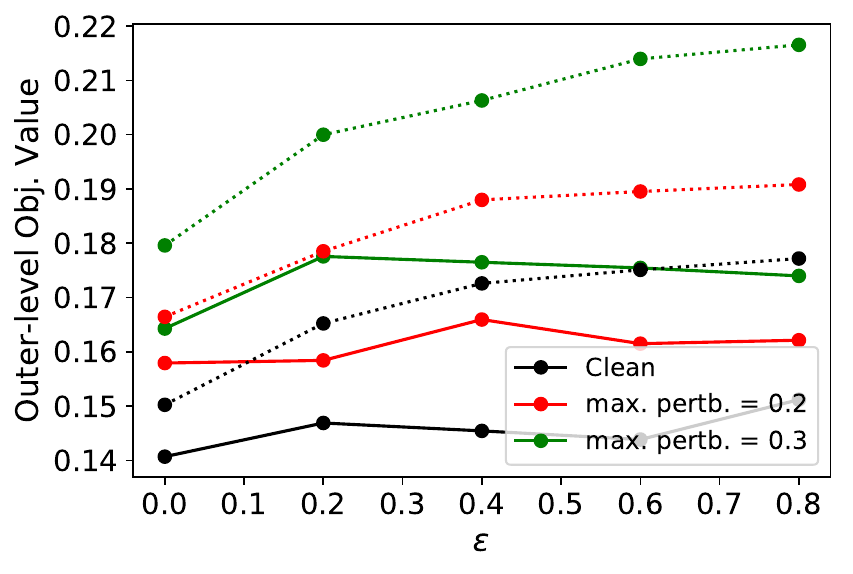}}
    \subfigure[Diabetes]{\label{fig:evalu-PIMA}\includegraphics[width=0.45\columnwidth]{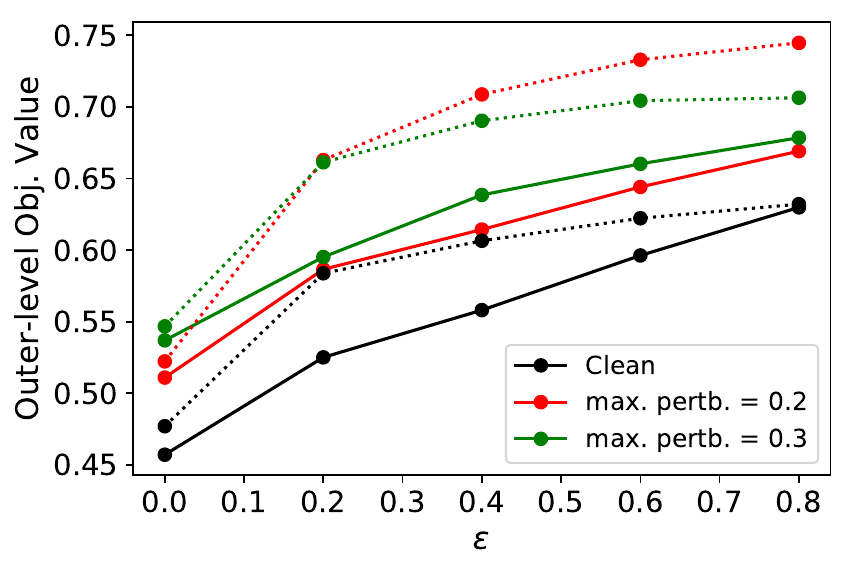}} \vspace{-3mm}
\end{center}    
\caption {Evaluation experiments for validation of the optimization solution quality. \textbf{Dashed line:} optimistic models, \textbf{solid line:} pessimistic models.}
\label{fig:evalu}
\end{figure}

As we have used the flipping operation to approximate the inner-level non-concave maximization problem in \eqref{RPBL-hyper} of the main text, we have performed an experiment to evaluate the optimization solution quality. To do so, we compare the outer-level objective function values of the pessimistic, \eqref{A-RPBL-hyper} in the main text, and optimistic models under a worst-case scenario. To obtain the evaluated optimistic model under a worst-case scenario, namely the model parameters $w$ and $b$, we first solve the optimistic bilevel hyperparameter optimization problem in \eqref{prob:optim-hinge}. Then, we compute an optimal outer-level solution $\bar{w}^*$ along with the corresponding inner-level optimal objective value $\theta(\bar{w}^*)$ given $\bar{w}^*$. We then solve the following optimization problem for various values of $\varepsilon$:

\setcounter{defcounter}{0}
\begin{myequation}
    \begin{aligned}
    \label{evaluation-ML}
    \underset{w,b,\xi}{\text{max.}} ~ & \dfrac{1}{|V|}\sum_{i\in V} [1-y_i(x_i^Tw-b)]_+ \\
    \text{s.t.} ~ & -\bar{w}^* \leq w \leq \bar{w}^* \\
    & \dfrac{1}{|T|}\sum_{i\in T}{\xi_i} \leq \theta(\bar{w}^*)(1+\varepsilon) \\
    & \xi_i \geq 1-y_i(x_i^Tw-b),~\forall i \in T \\
    & \xi_i \geq 0,~\forall i \in T.
    \end{aligned}
\end{myequation}
To solve the problem in \eqref{evaluation-ML}, we apply the previously mentioned flipping operation to compute the pessimistic solution and solve the minimization problem over the validation examples with flipped labels. Such comparison of the objective function values provides us a way to evaluate the performance of our approximate procedure as the objective function value of \eqref{evaluation-ML} should be consistently larger than that of the pessimistic one for various $\varepsilon$ values due to the nature of these two formulations. 
Figure~\ref{fig:evalu} presents the corresponding average outer-level objective function values of the optimistic and pessimistic formulations for five runs with random training-validation-test splits. The performances of the optimistic and pessimistic models are shown in dashed and solid lines, respectively. We have also computed the outer-level objective function values for different levels of perturbation ($\rho$) in the validation sets as described in Section~\ref{perturbedData}. These results are color coded in green, red and black. We observe that, for optimistic solutions, the outer-level objective function values monotonically increase in $\varepsilon$, i.e., it increase as the inner-level solution sets become larger. This demonstrates that optimistic solutions can have degraded prediction performance with higher hinge loss on the validation set when the uniqueness assumption of the inner-level solution is violated. At the same time, for pessimistic solutions, the outer-level objective function values stay approximately same (see Fig.~\ref{fig:evalu}(a)) or increase in $\varepsilon$ (see Fig.~\ref{fig:evalu}(b)) but still remaining smaller than the objective function values of the optimistic counterparts. Also, note the slower increase of the outer-level objective function values for the pessimistic solutions from $\varepsilon=0$ to $0.2$, compared to those of the optimistic solutions in Fig.~\ref{fig:evalu}(b). More critically, the outer-level objective function values on the validation set obtained by the optimistic models under a worst-case scenario are consistently larger than the outer-level objective values of the approximated pessimistic counterparts. All these behavior trends of the outer-level objective functions values validate the solution quality of our pessimistic bilevel hyperparameter optimization formulation.


\subsection{Experiments: UCI Data sets}\label{appdx:experimentsUCI}

In Figure~\ref{fig:small-accVStrsize-diabet}, we present additional experimental results with the Diabetes data set with limited numbers of training and validation data. It is clear that the behavior trends on the Diabetes data set are consistent with the ones on the Cancer data set as described in Section~\ref{smallData} of the main text, further confirming the necessity of pessimistic bilevel hyperparameter optimization under data scarcity. 

\begin{figure}[ht!]
\begin{center}
    \subfigure{\label{fig:small-PIMA-val5}\includegraphics[width=0.49\columnwidth]{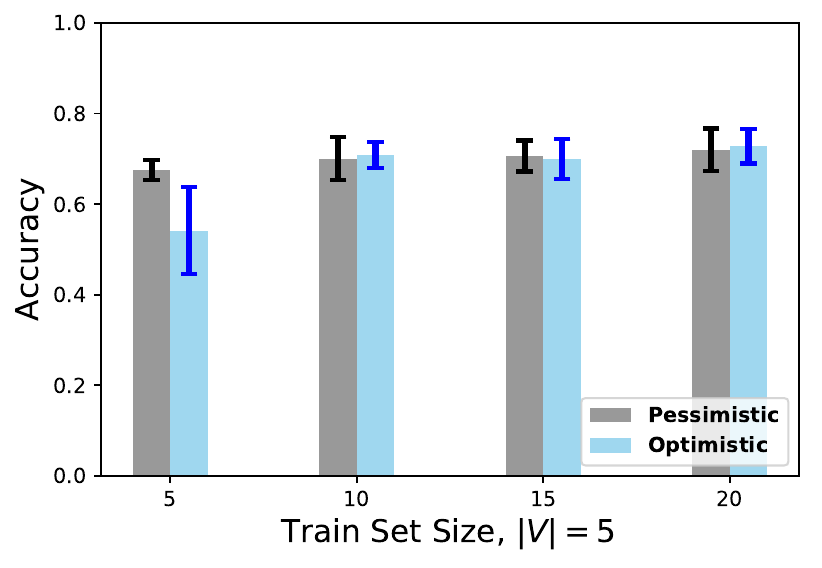}}
    \subfigure{\label{fig:small-PIMA-val20}\includegraphics[width=0.49\columnwidth]{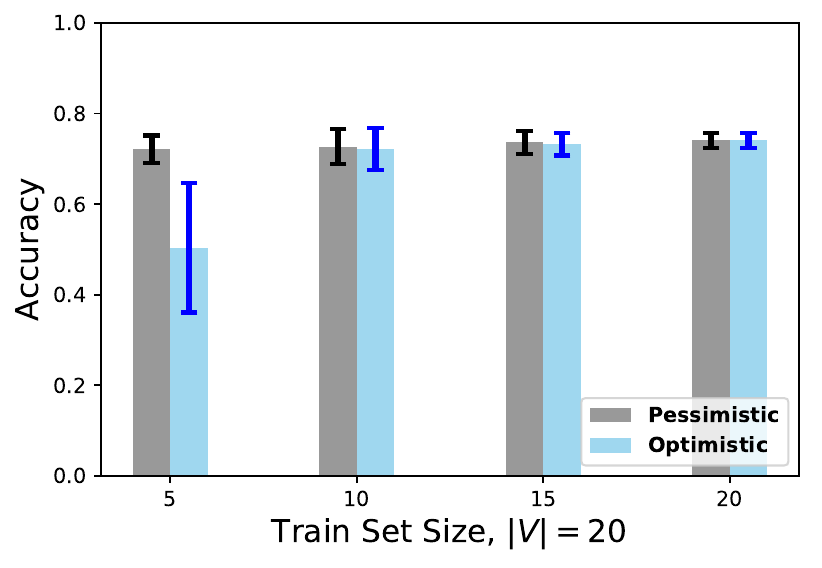}} \\
    \subfigure{\label{fig:small-PIMA-tr5}\includegraphics[width=0.49\columnwidth]{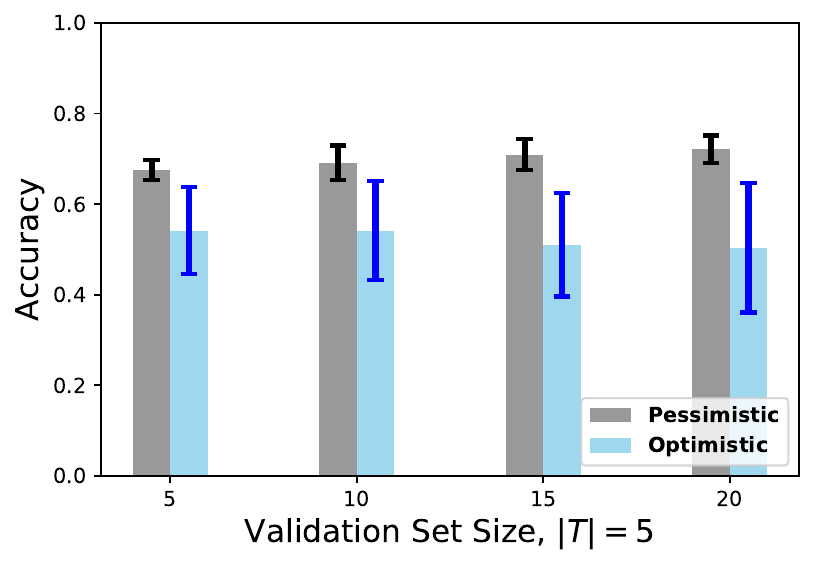}}
    \subfigure{\label{fig:small-PIMA-tr20}\includegraphics[width=0.49\columnwidth]{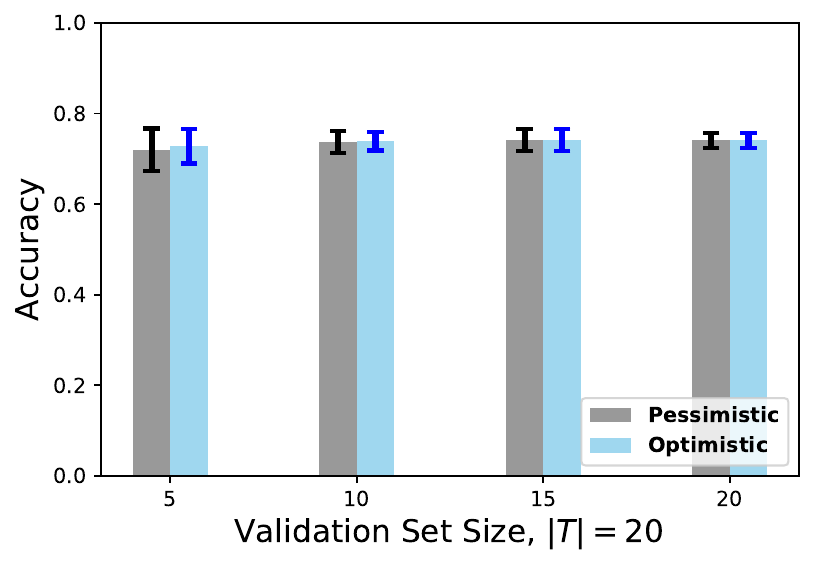}} \vspace{-5mm}
\end{center}    
\caption{Average testing accuracy for varying training and validation set sizes for the Diabetes data set.}
\label{fig:small-accVStrsize-diabet}
\end{figure}

In addition to these plots, we provide the quantitative values of the averaging testing accuracy~(acc) and the corresponding standard deviation~(std) in Table~\ref{tab:small_PIMA-BC}. We also provide the corresponding entries for the additional experiments with $(|V|=10,~|T|=10)$, $(|V|=10,~|T|=15)$, $(|V|=15,~|T|=10)$, and $(|V|=15,~|T|=15)$ in the table for the sake of the completeness of the results.

\setcounter{table}{0}
\renewcommand{\thetable}{A\arabic{table}}

\begin{table}[b!]
\small
    \caption{Averaging testing accuracy (acc) and the corresponding standard deviation (std) values of pessimistic and optimistic solutions with different training ($|T|$) and validation set size ($|V|$) on both the Cancer and Diabetes UCI data sets.} 
    \begin{center}
    \begin{subtable}
    \centering
    \begin{tabular}{c c c c}
    \toprule
\multicolumn{4}{c}{Cancer (Figure~\ref{fig:small-accVStrsize})} \\ \toprule    
\multirow{2}{*}{$|V|$} & \multirow{2}{*}{$|T|$} &   \multicolumn{2}{c}{acc $\pm$ std }  \\      
    &   & pessimistic & optimistic  \\ \hline
 5  & 5     & $0.861 \pm 0.066$ & $0.600 \pm 0.199$ \\
 5  & 10    & $0.898 \pm 0.052$ & $0.793 \pm 0.092$ \\
 5  & 15    & $0.893 \pm 0.043$ & $0.863 \pm 0.054$ \\
 5  & 20    & $0.908 \pm 0.019$ & $0.889 \pm 0.038$ \\
 
 10  & 5     & $0.885 \pm 0.038$ & $0.553 \pm 0.174$ \\
 10  & 10    & $0.901 \pm 0.039$ & $0.763 \pm 0.101$ \\
 10  & 15    & $0.906 \pm 0.031$ & $0.870 \pm 0.065$ \\
 10  & 20    & $0.907 \pm 0.034$ & $0.880 \pm 0.054$ \\
 
 15  & 5     & $0.890 \pm 0.036$    & $0.568 \pm 0.189$ \\
 15  & 10    & $0.920 \pm 0.030$    & $0.760 \pm 0.099$ \\
 15  & 15    & $0.923 \pm 0.026$    & $0.834 \pm 0.050$ \\
 15  & 20    & $0.935 \pm 0.016$    & $0.886 \pm 0.053$ \\
 
 20  & 5     & $0.895 \pm 0.033$ & $0.574 \pm 0.157$ \\
 20  & 10    & $0.908 \pm 0.029$ & $0.784 \pm 0.073$ \\
 20  & 15    & $0.926 \pm 0.025$ & $0.854 \pm 0.064$ \\
 20  & 20    & $0.928 \pm 0.015$ & $0.886 \pm 0.031$ \\ 
 \bottomrule
    \end{tabular}
    \label{tab:small_BC}
    \end{subtable}%
    \begin{subtable}
    \centering
    \begin{tabular}{c c c c}
    \toprule
\multicolumn{4}{c}{Diabetes (Figure~\ref{fig:small-accVStrsize-diabet})} \\  \toprule    
\multirow{2}{*}{$|V|$} & \multirow{2}{*}{$|T|$} &   \multicolumn{2}{c}{acc $\pm$ std }  \\      
    &   & pessimistic & optimistic  \\ \hline
 5  & 5     & $0.674 \pm 0.022$ & $0.542 \pm 0.097$ \\
 5  & 10    & $0.700 \pm 0.047$ & $0.708 \pm 0.029$ \\
 5  & 15    & $0.706 \pm 0.034$ & $0.699 \pm 0.044$ \\
 5  & 20    & $0.720 \pm 0.047$ & $0.728 \pm 0.038$ \\
 
 10  & 5     & $0.691 \pm 0.038$ & $0.542 \pm 0.109$ \\
 10  & 10    & $0.717 \pm 0.046$ & $0.704 \pm 0.035$ \\
 10  & 15    & $0.722 \pm 0.031$ & $0.735 \pm 0.031$ \\
 10  & 20    & $0.737 \pm 0.025$ & $0.739 \pm 0.021$ \\
 
 15  & 5     & $0.709 \pm 0.034$  & $0.510 \pm 0.114$ \\
 15  & 10    & $0.731 \pm 0.031$  & $0.689 \pm 0.059$ \\
 15  & 15    & $0.729 \pm 0.034$  & $0.732 \pm 0.033$ \\
 15  & 20    & $0.742 \pm 0.024$  & $0.742 \pm 0.024$ \\
 
 20  & 5     & $0.721 \pm 0.030$ & $0.504 \pm 0.143$ \\
 20  & 10    & $0.727 \pm 0.039$ & $0.722 \pm 0.046$ \\
 20  & 15    & $0.736 \pm 0.025$ & $0.732 \pm 0.025$ \\
 20  & 20    & $0.741 \pm 0.016$ & $0.741 \pm 0.016$ \\ 
 \bottomrule
    \end{tabular}
    \label{tab:small_PIMA}
    \end{subtable}%
\end{center}
\label{tab:small_PIMA-BC}
\end{table}


We also provide the quantitative values of the averaging testing accuracy and the corresponding standard deviation for the perturbed data experiments presented in Section~\ref{perturbedData} of the main text. Due to the large number of different setups we have tested in Figures \ref{fig:3Dbar-BC-testDmax_ebs} and \ref{fig:3Dbar-PIMA-testDmax_ebs} of the main text, here we only provide the corresponding values with a fixed inner-level suboptimality rate $\varepsilon$ ($=0.2$) in Tables~\ref{tab:pert-BC-PIMA} and~\ref{tab:pertVal-BC-PIMA}, respectively. The averaging testing accuracy values given by pessimistic solutions are larger than the ones given by optimistic solutions (except the last row in Table~\ref{tab:pert-BC-PIMA} for the Diabetes data set). This difference is greater for the results in Table~\ref{tab:pertVal-BC-PIMA}, when the perturbation is $0.3$ on the validation set. As discussed in Section~\ref{perturbedData} of the main text, when imposing appropriate inner-level model uncertainty, pessimistic bilevel hyperparameter optimization can achieve more robust predictions on shifted or adversarially perturbed test data than the optimistic counterpart as commonly adopted in the existing AutoML practice.

\begin{table}[b!]
\small
    \caption{Averaging testing accuracy (acc) and the corresponding standard deviation (std) values of pessimistic and optimistic solutions with $\varepsilon=0.2$ for the experiments with different perturbations on test data but no perturbation on validation data.}
\resizebox{\textwidth}{!}{\begin{subtable}
    \centering
    \begin{tabular}{c c c c}
    \toprule
\multicolumn{4}{c}{Cancer (Figure~\ref{fig:cleanVal-BC})} \\  \toprule    
\multirow{2}{*}{$\rho$-validation} & \multirow{2}{*}{$\rho$-test} &   \multicolumn{2}{c}{acc $\pm$ std }  \\ 
  &   & pessimistic & optimistic  \\ \hline
 0  & 0      & $0.938 \pm 0.007$ & $0.930 \pm 0.021$  \\
 0  & 0.1    & $0.933 \pm 0.010$ & $0.920 \pm 0.026$ \\
 0  & 0.2    & $0.932 \pm 0.010$ & $0.914 \pm 0.025$ \\
 0  & 0.3    & $0.929 \pm 0.013$ & $0.911 \pm 0.026$ \\
 0  & 0.4    & $0.921 \pm 0.015$ & $0.904 \pm 0.027$  \\
 0  & 0.5    & $0.917 \pm 0.016$ & $0.899 \pm 0.028$ \\
 0  & 0.6    & $0.911 \pm 0.020$ & $0.895 \pm 0.025$ \\
 \bottomrule
    \end{tabular}
   \end{subtable}%
\begin{subtable}
    \centering
    \begin{tabular}{c c c c}
    \toprule
\multicolumn{4}{c}{Diabetes (Figure~\ref{fig:cleanVal-PIMA})} \\  \toprule    
\multirow{2}{*}{$\rho$-validation} & \multirow{2}{*}{$\rho$-test} &   \multicolumn{2}{c}{acc $\pm$ std }  \\  
  &   & pessimistic & optimistic  \\ \hline
 0  & 0      & $0.727 \pm 0.024$  & $0.723 \pm 0.014$  \\
 0  & 0.1    & $0.718 \pm 0.022$  & $0.710 \pm 0.009$ \\
 0  & 0.2    & $0.703 \pm 0.019$  & $0.695 \pm 0.013$ \\
 0  & 0.3    & $0.687 \pm 0.020$  & $0.682 \pm 0.020$ \\
 0  & 0.4    & $0.667 \pm 0.016$  & $0.667 \pm 0.023$  \\
 0  & 0.5    & $0.647 \pm 0.022$  & $0.652 \pm 0.029$ \\
 0  & 0.6    & $0.629 \pm 0.027$  & $0.633 \pm 0.035$ \\
 \bottomrule
    \end{tabular}
\end{subtable}}%
\label{tab:pert-BC-PIMA}
\end{table}


\begin{table}[!]
\small
    \caption{Averaging testing accuracy and the corresponding standard deviation (std) values of pessimistic and optimistic solutions with $\varepsilon=0.2$ for the experiments with different perturbations on test data and perturbations ($\rho=0.3$) on validation data.}
 \resizebox{\textwidth}{!}{\begin{subtable}
    \centering
    \begin{tabular}{c c c c}
    \toprule
\multicolumn{4}{c}{Cancer (Figure~\ref{fig:adverVal-BC})} \\  \toprule    
\multirow{2}{*}{$\rho$-validation} & \multirow{2}{*}{$\rho$-test} &   \multicolumn{2}{c}{acc $\pm$ std }  \\ 
  &   & pessimistic & optimistic  \\ \hline
 0.3  & 0      & $0.945 \pm 0.080$  & $0.933 \pm 0.013$  \\ 
 0.3  & 0.1    & $0.944 \pm 0.080$  & $0.929 \pm 0.015$ \\
 0.3  & 0.2    & $0.939 \pm 0.090$  & $0.924 \pm 0.018$ \\
 0.3  & 0.3    & $0.935 \pm 0.090$  & $0.921 \pm 0.020$ \\
 0.3  & 0.4    & $0.935 \pm 0.090$  & $0.914 \pm 0.018$  \\
 0.3  & 0.5    & $0.929 \pm 0.013$  & $0.905 \pm 0.024$ \\
 0.3  & 0.6    & $0.926 \pm 0.012$  & $0.892 \pm 0.033$ \\
 \bottomrule
    \end{tabular}
    \label{tab:pert-BC}
    \end{subtable}%
\begin{subtable}
    \centering
    \begin{tabular}{c c c c}
    \toprule
\multicolumn{4}{c}{Diabetes (Figure~\ref{fig:adverVal-PIMA})} \\  \toprule    
\multirow{2}{*}{$\rho$-validation} & \multirow{2}{*}{$\rho$-test} &   \multicolumn{2}{c}{acc $\pm$ std }  \\  
  &   & pessimistic & optimistic  \\ \hline
 0.3  & 0      & $0.713 \pm 0.024$  & $0.683 \pm 0.016$  \\
 0.3  & 0.1    & $0.699 \pm 0.032$  & $0.675 \pm 0.010$ \\
 0.3  & 0.2    & $0.685 \pm 0.036$  & $0.668 \pm 0.014$ \\
 0.3  & 0.3    & $0.673 \pm 0.038$  & $0.657 \pm 0.025$ \\
 0.3  & 0.4    & $0.659 \pm 0.043$  & $0.649 \pm 0.032$  \\
 0.3  & 0.5    & $0.646 \pm 0.049$  & $0.637 \pm 0.044$ \\
 0.3  & 0.6    & $0.633 \pm 0.051$  & $0.627 \pm 0.057$ \\
 \bottomrule
    \end{tabular}
    \end{subtable}}%
    \label{tab:pertVal-BC-PIMA}
\end{table}

\subsection{Experiments: MNIST and FashionMNIST}\label{appdx:experimentsMNIST}
In Figure~\ref{fig:TF-fashion-images}, we present some example images from the test set of FashionMNIST which are missclassified by the optimistic model and correctly classified by the pessimistic model. 

\begin{figure}[h]
\begin{center}
\centerline{\includegraphics[width=0.85\columnwidth]{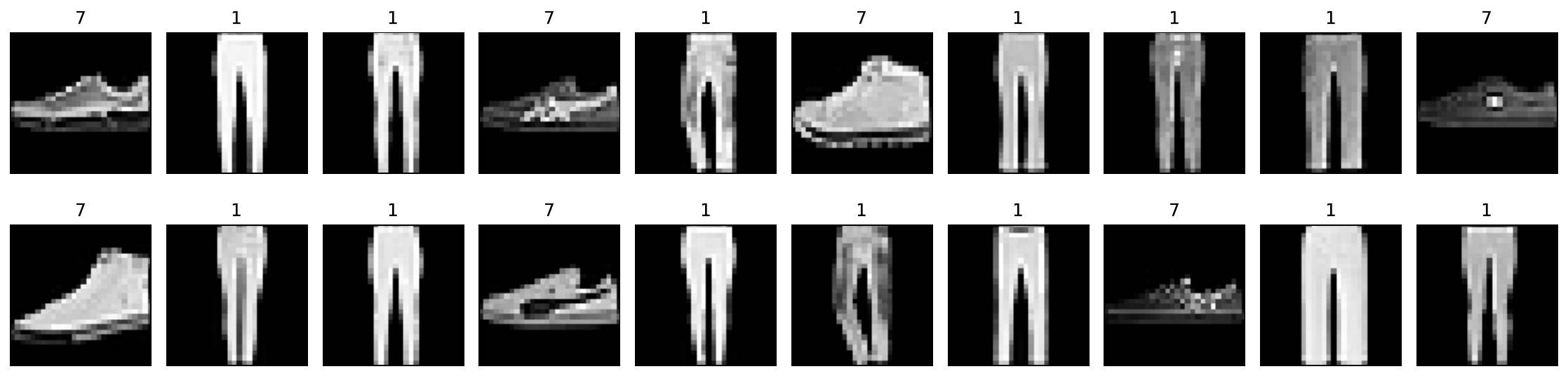}}\vspace{-3mm}\vspace{-2mm}
\caption{Examples of FashionMNIST images which are misclassifed by optimistic model but classified correctly by pessimistic model. True digit labels are given on top of each image.}
\label{fig:TF-fashion-images}\vspace{-3mm}
\end{center}
\vskip -0.2in
\end{figure}

\newpage

\clearpage

\bibliography{references}

\end{document}